\documentclass{article} %
\usepackage{iclr2027_conference,times}
\usepackage[T1]{fontenc}
\usepackage[utf8]{inputenc}

\usepackage{amsmath,amsfonts,bm}

\def\eqref#1{equation~\ref{#1}}

\def\1{\bm{1}}

\DeclareMathAlphabet{\mathsfit}{\encodingdefault}{\sfdefault}{m}{sl}
\SetMathAlphabet{\mathsfit}{bold}{\encodingdefault}{\sfdefault}{bx}{n}

\usepackage[table]{xcolor}
\usepackage[pagebackref=true,breaklinks=true,colorlinks=true,bookmarks=false]{hyperref}
\definecolor{deepred}{HTML}{940000}
\hypersetup{linkcolor=deepred}

\definecolor{linkpurple}{rgb}{0.40,0.15,0.95}
\definecolor{lightblue}{rgb}{0.30,0.60,1.00}

\colorlet{citeblue}{linkpurple!60!lightblue} 
\colorlet{citemuted}{citeblue!90!gray} 

\hypersetup{
  citecolor = citemuted,
  urlcolor  = citemuted,
}

\usepackage{url}
\usepackage{amsmath}  %
\usepackage{amssymb}  %
\usepackage{booktabs} %
\usepackage{graphicx}
\usepackage{multirow}
\usepackage{tabularx}
\usepackage{enumitem}
\usepackage{wrapfig}
\usepackage{caption}
\usepackage{amsthm}
\newtheorem{lemma}{Lemma}
\newtheorem{proposition}{Proposition}
\newtheorem{corollary}{Corollary}

\usepackage{comment}

\usepackage[toc,page,header]{appendix}
\usepackage{minitoc}

\renewcommand \thepart{}
\renewcommand \partname{}

\usepackage{caption}

\definecolor{Gray}{gray}{0.92}
\newlength\savewidth\newcommand\shline{\noalign{\global\savewidth\arrayrulewidth
  \global\arrayrulewidth 1pt}\hline\noalign{\global\arrayrulewidth\savewidth}}

\title{\vspace{-4mm}\fontsize{16.5pt}{\baselineskip}\selectfont Rethinking Sequence Compression with\\Test-time Regression}

\author{%
  \fontsize{9pt}{\baselineskip}\selectfont
  Sihan Yang\textsuperscript{1}~~~Jie Huang\textsuperscript{2}~~~Hanxuan Li\textsuperscript{1}~~~Yifan Guo\textsuperscript{1}~~~Weigao Sun\textsuperscript{3}~~~Yiran Zhong\textsuperscript{3}~~~Weiyang Liu\textsuperscript{1,*}\\[0.175em]
  \fontsize{9pt}{\baselineskip}\selectfont
  \textsuperscript{1}The Chinese University of Hong Kong~~~\textsuperscript{2}Independent Researcher~~~\textsuperscript{3}Token Foundry, Alibaba Group\\[0.075em]
  \fontsize{9pt}{\baselineskip}\selectfont
  \textsuperscript{*}Corresponding author~~~~~~~~{\href{https://spherelab.ai/ttr-compressor}{\textbf{\texttt{spherelab.ai/ttr-compressor}}}}  
}

\newcommand{\ie}{\emph{i.e.}}
\newcommand{\eg}{\emph{e.g.}}

\iclrfinalcopy %
\begin{document}

\maketitle
\doparttoc
\faketableofcontents

\begin{figure}[ht]
    \vspace{-0.15in}
    \centering
    \includegraphics[width=.975\textwidth]{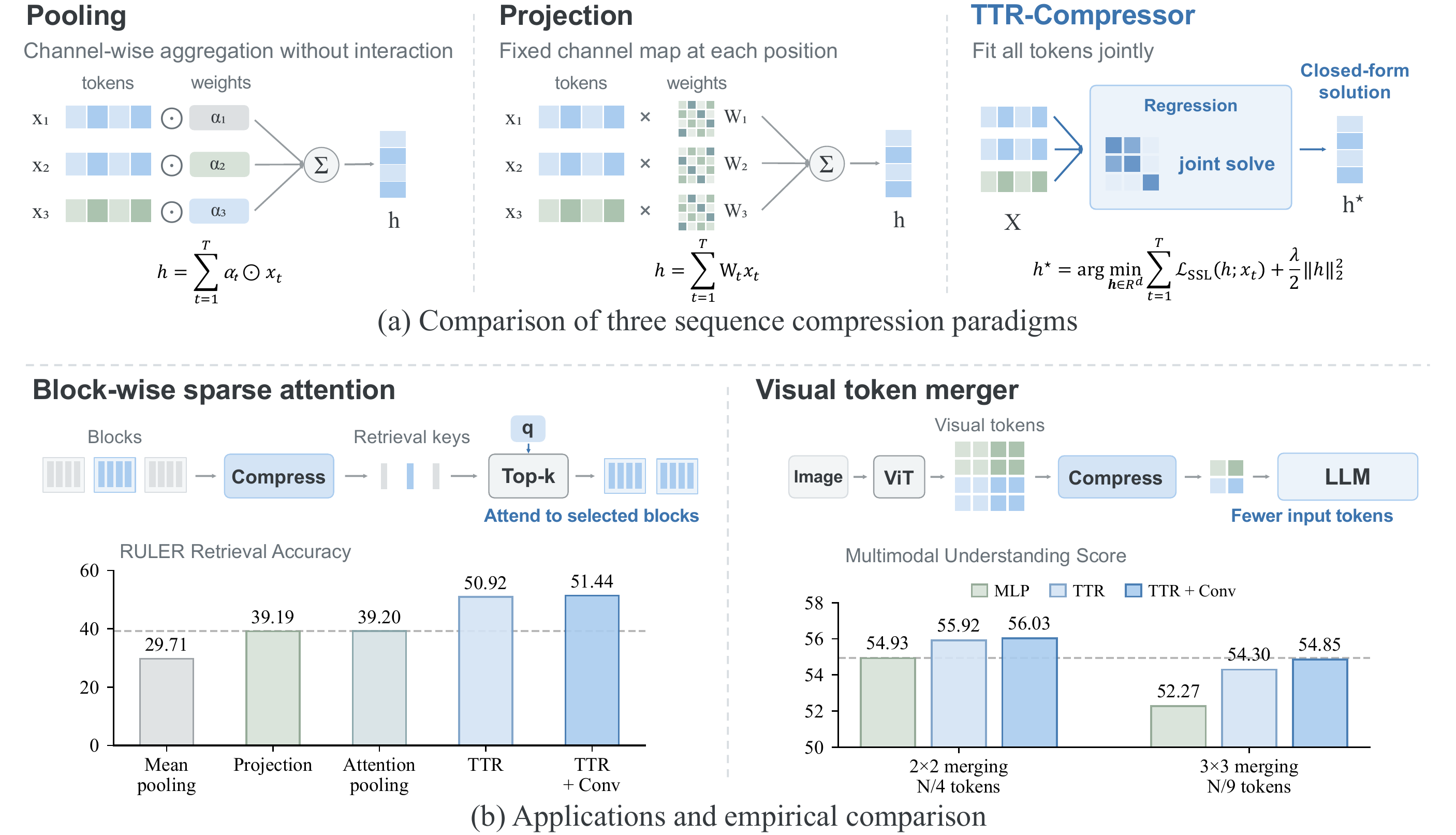}
    \vspace{-1.5mm}
    \caption{\footnotesize \textbf{Overview of TTR-Compressor.} (a) Unlike pooling and projection, which process tokens independently, TTR-Compressor fits all tokens jointly and thereby captures the interplay among them. (b) TTR-Compressor outperforms existing methods on both block-wise sparse attention and visual token merging.}
    \label{fig:teaser}
    \vspace{1.75mm}
\end{figure}

\begin{abstract}
\vspace{-1mm}
Sequence compression is a fundamental operation and a critical information bottleneck in efficient language and multimodal models, and yet prevailing approaches based on pooling or learned projections remain underperforming.
We revisit sequence compression through the lens of test-time regression: %
input tokens define a regression problem whose fitted coefficients form the compressed representation.
Importantly, the regression problem is learned end-to-end via backpropagation, while its coefficients are computed efficiently in closed form during the forward pass.
This leads to TTR-Compressor, a general alternative to conventional pooling and projection-based compression.
We evaluate TTR-Compressor as the compressor for block-wise sparse attention and the visual token merger for multimodal large language models. 
On block-wise sparse attention, TTR-Compressor improves RULER performance over attention-based compressors by a large margin.
For visual token merging, it consistently outperforms the standard MLP merger and matches its performance with 55.6\% fewer visual tokens.
These results establish test-time regression as an effective primitive for sequence compression in efficient language and multimodal models.

\end{abstract}

\vspace{-3mm}
\section{Introduction}
\label{sec:intro}
\vspace{-1.25mm}

Sequence compression is a ubiquitous operation in modern efficient large model architectures~\citep{qwen38flashnext,deepseekv41flash}, and yet remains surprisingly underexplored. For example, block-wise sparse attention compresses each token block into a summary vector used to score and select relevant blocks~\citep{minference,moba,nsa}, while multimodal large language models (MLLMs) merge neighboring visual features into compact tokens before feeding them into the language backbone~\citep{internvl35,qwen3vl,gemma4}. In both settings, the compressor acts as a critical information bottleneck. Once information is discarded, it cannot be recovered downstream.
One would expect compression to be a carefully designed architectural component. However, most models rely on simple mechanisms, such as parameter-free pooling, learned projections over concatenated tokens, or at best, attention-weighted pooling.

We argue that these default compressors share a fundamental structural limitation, \ie, \emph{they aggregate tokens using weights that do not account for inter-token relationships}.
Mean pooling uses uniform weights, linear projection uses fixed position-dependent weights, and attention-weighted pooling scores each token independently. 
As a result, redundant tokens can reinforce one another, while distinctive information is diluted.
From a statistical perspective, these methods primarily exploit \emph{first-order} token statistics while ignoring \emph{second-order} inter-token correlations, the very structure that reveals which information is redundant and which is distinctive.

We introduce a fundamentally different compression strategy. Rather than asking \emph{which weighted average} best summarizes a set of tokens, we let the tokens define a regression problem
and take its \emph{fitted coefficients} as the summary. Specifically, our proposed TTR-Compressor, constructs a self-supervised ridge-regression problem directly from the input tokens, requiring no external labels, and uses its closed-form solution as the compressed representation. 
This formulation creates a natural division: the outer training loop meta-learns \emph{which} regression problem produces task-useful representations, while the inner closed-form solution determines \emph{how} that problem should be fitted to each input. The key distinction from conventional pooling and projection arises from the Gram matrix in the regression solution. 
By explicitly capturing inter-token correlations, TTR-Compressor fits correlated tokens jointly, so redundant evidence has diminishing rather than purely additive influence. 
This input-adaptive behavior cannot be reproduced by a fixed projection or additive token combination.
Moreover, learning the regularizer removes the need for ridge hyperparameter tuning in large-scale pretraining. Because the compression window is typically smaller than the channel dimension (\ie, $T < d$), we solve the dual problem with an efficient $T \times T$ Cholesky factorization. 
Thus, TTR-Compressor incurs minimal overhead and introduces only $2d$ learnable parameters per compressor, making it an effective drop-in replacement for sequence compression.

We evaluate TTR-Compressor on two applications that impose fundamentally different requirements on compression. In \emph{block-wise sparse attention}, the compressed summary must act as a faithful \emph{retrieval key}. In this setting, TTR-Compressor consistently outperforms traditional baselines on RULER~\citep{ruler} retrieval tasks, surpassing sophisticated attention-based compressors~\citep{hils} by a considerable margin. In \emph{visual token merging}, by contrast, the compressed summary must preserve a faithful \emph{representation}. Here, TTR-Compressor consistently outperforms the widely used MLP merger across diverse compression ratios, and achieves comparable performance with 55.6\% fewer visual tokens. Our contributions are summarized below:
\begin{itemize}[leftmargin=*,itemsep=2pt,topsep=2pt]
  \item \textbf{A novel compression framework.} We introduce a test-time regression framework for sequence compression, reframing token summarization as an instance-adaptive optimization problem.
  \item \textbf{A practical operator.} We derive an efficient dual closed-form solution and eliminate hyperparameter tuning, making TTR-Compressor fully practical and scalable for large-scale pretraining.
  \item \textbf{Empirical effectiveness.} As a drop-in component in large models, TTR-Compressor consistently outperforms conventional baselines in both block-wise sparse attention and visual token merging.
\end{itemize}

\vspace{-1.5mm}
\section{Related Work}
\label{sec:related}
\vspace{-1.5mm}

\textbf{Sequence compression.}
Compressing a sequence of tokens into a single vector is a fundamental operation across modern architectures, with applications ranging from multimodal token reduction~\citep{tome,blip2,deco,honeybee,qwen2audio} to block-wise sparse attention~\citep{minference,moba,seer,flexprefill}.
Existing compression operators largely fall into three paradigms: 
(1) \emph{parameter-free pooling}, such as mean pooling; 
(2) \emph{concatenation followed by learned projection}; and 
(3) \emph{attention-based compression}, which summarizes tokens via attention-weighted averaging~\citep{landmark_attention,vflowopt}.
A fundamental limitation of these families is that they compute token combinations without explicitly accounting for the inter-token correlations. 
In contrast, TTR-Compressor derives its representation as the solution to a linear system whose Gram matrix directly encodes these correlations, providing an expressive, drop-in replacement for the operators above.

\textbf{Test-time regression and test-time training.}
The idea of adapting part of a network on each input at inference time dates back to fast-weight programmers~\citep{fast_weight_memory,fast_weight_programmers} and was revived as test-time training (TTT), first for robustness under distribution shift~\citep{ttt_shifts} and then as sequence-modeling layers whose hidden states are small models updated by gradient descent on a self-supervised loss~\citep{ttt_rnns,titans,nested_learning,vit3}. In-Place TTT~\citep{In-place_ttt} further extends this paradigm by directly adapting the existing MLP projection weights of pretrained Transformers. A parallel line of work reinterprets linear attention and its descendants (\eg, DeltaNet~\citep{deltanet}, Gated DeltaNet~\citep{gateddeltanet}, Longhorn~\citep{longhorn}, Titans~\citep{titans}) as online regression: each layer maintains a key-to-value regressor that is refined with one gradient or delta-rule step per token and read out by the query. MesaNet~\citep{mesanet} pushes the inner optimization to convergence, solving the problem at every time step with conjugate gradients, while LaCT~\citep{lact} updates fast weights over large, unordered chunks to improve hardware utilization.

TTR-Compressor differs from these methods in two fundamental ways. First, because compression operates independently on small, disjoint blocks without cross-block dependencies, the regression admits an efficient, \emph{exact closed-form solution} via a single $T\times T$ dual solve. This avoids both iterative gradient-based inner optimization (as in TTT, ViT$^3$, In-Place TTT, and MesaNet) and single-step approximations (such as delta-rule updates). Second, rather than using test-time regression to produce input-dependent model weights~\citep{meta_learning_with_closed-form_solvers,ttt_shifts} or a fast-weight memory that is subsequently queried~\citep{fast_weight_programmers,gateddeltanet,titans}, we use regression to directly yield a \emph{representation} of a token block. Specifically, the fitted coefficients themselves serve as the compressed representation (\eg, a visual token or a retrieval key).

\vspace{-1mm}
\section{Joint Sequence Compression via Test-time Regression}
\label{sec:methodology}
\vspace{-1mm}
We introduce {TTR-Compressor}, a sequence compression operator based on \textit{test-time regression}, as shown in Figure~\ref{fig:method}(a). Given a sequence of token representations, TTR-Compressor constructs a regression problem from the tokens and uses its fitted coefficients as the compressed representation. We first present the regression formulation of compression and specify its construction, then derive an efficient solution and describe how TTR-Compressor is learned end-to-end within a model.

\vspace{-.5mm}
\subsection{Compression as Test-time Regression}
\label{sec:ttr_framework}
\vspace{-.5mm}

We formulate sequence compression as ridge regression~\citep{ridge_regression}. The tokens in a sequence provide regression observations, and the fitted regression coefficients serve as a compact representation of that sequence. The regression objective encourages this representation to capture the relationships among tokens, while regularization controls the magnitude of the fitted coefficients.

Given a sequence $\bm{X}=[\bm{x}_1,\ldots,\bm{x}_T]^\top
\in\mathbb{R}^{T\times d}$, our goal is to compress its $T$ token representations into a single vector $\bm{h}\in\mathbb{R}^{d}$.
We associate each token with a regression key $\bm{k}_t\in\mathbb{R}^{d}$ and a scalar target $v_t\in\mathbb{R}$, and define the compressed representation as:
\begin{equation}
    \bm{h}^{\star}
    =
    \arg\min_{\bm{h}\in\mathbb{R}^{d}}
    \frac{1}{2}\sum_{t=1}^{T}
    \omega_t\big(\bm{k}_t^\top\bm{h}-v_t\big)^2
    +
    \frac{\lambda}{2}\|\bm{h}\|_2^2,
    \label{eq:ttr_objective}
\end{equation}
where $\omega_t\geq 0$ determines the contribution of each token and $\lambda>0$ controls regularization.
$\bm{h}^{\star}$ summarizes the sequence by fitting the key-target relationships across its tokens. 
The regression target $v_t$ is \emph{meta-learned} through the task loss, while the regression itself is constructed and solved for each input sequence during both training and inference.
Overall, we are solving a bi-level optimization, where the outer objective is the task loss that finds the task-useful target $v_t$, and the inner objective is the test-time regression Eq.~\ref{eq:ttr_objective} that solves the compressed representation $\bm{h}^{\star}$. Here, \textit{test-time} refers to fitting these input-specific coefficients; the learned model parameters remain fixed at inference.

\begin{figure}[t]
    \vspace{-0.15in}
    \centering
    \includegraphics[width=\textwidth]{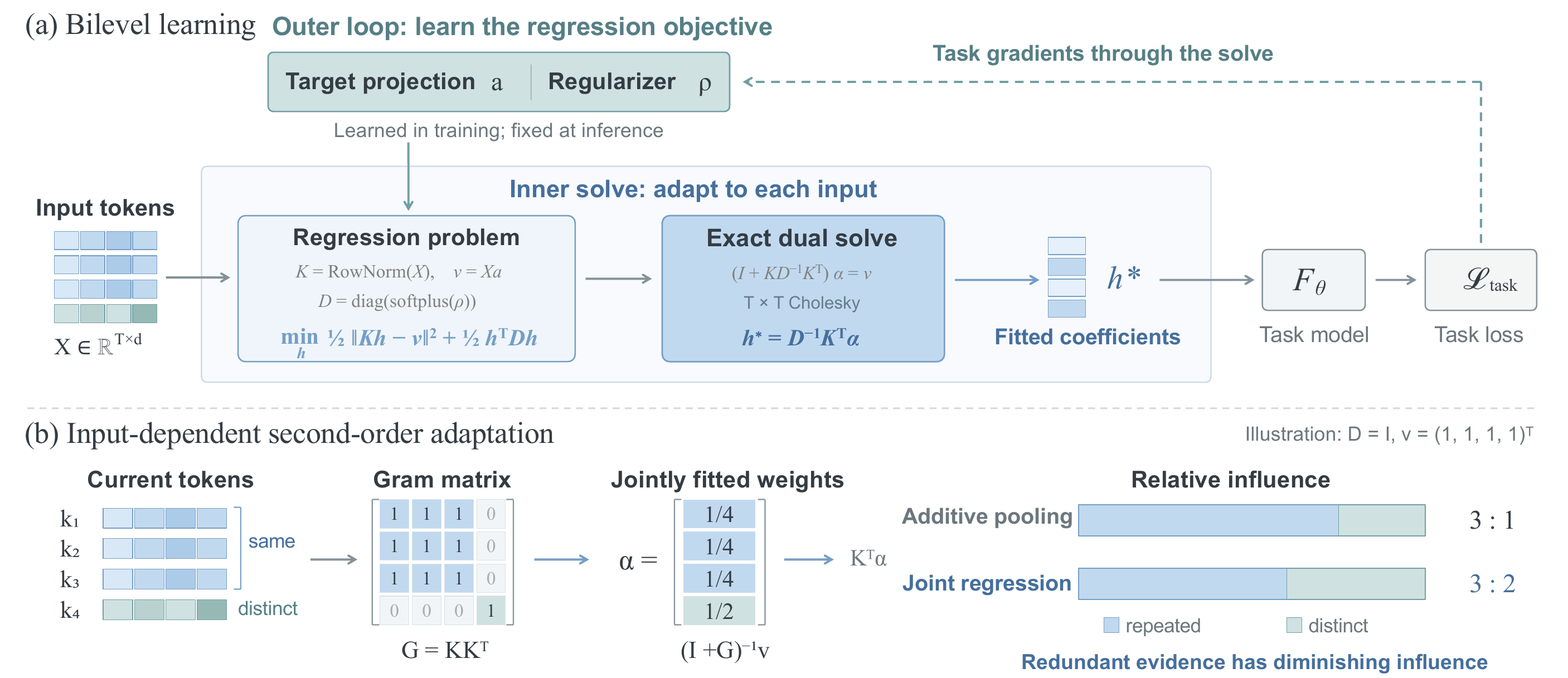}
    \vspace{-0.23in}
    \caption{\footnotesize \textbf{Overview of TTR-Compressor.} (a) Bilevel learning: The inner solve formulates sequence compression as test-time ridge regression with an exact dual Cholesky solve to obtain the representation $\bm{h}^*$, while the regression parameters are learned end-to-end via task gradients. (b) Input-dependent second-order adaptation: By jointly coupling tokens through the Gram matrix, TTR naturally diminishes the relative influence of redundant tokens compared to additive pooling. For intuitive illustration, we set $\bm{D}=\bm{I}$ and uniform targets $\bm{v}=\mathbf{1}$; as typically learned in practice, a smaller regularizer $\bm{D}$ further amplifies this deduplication effect, yielding an even higher relative share for distinct tokens.}
    \vspace{-0.06in}
\label{fig:method}
\end{figure}

\vspace{-.5mm}
\subsection{Constructing the Regression Problem}
\label{sec:ttr_construction}
\vspace{-.5mm}

\textbf{Keys and Targets.}
We construct regression keys by normalizing the input features and generate scalar targets using a learned projection:
$\bm{k}_t=\frac{\bm{x}_t}{\|\bm{x}_t\|_2},v_t = \bm{a}^\top\bm{x}_t,$
where $\bm{a}\in\mathbb{R}^{d}$ is shared across tokens and sequences within a compressor.
Key normalization ensures that $\bm{K}^\top\bm{K}$ measures directional (semantic) similarity alone~\citep{liu2017deep}. Crucially, this normalization is asymmetric, as targets retain original token magnitudes ($v_t = \bm{a}^\top \bm{x}_t$). If targets were also normalized ($v_t = \bm{a}^\top \bm{k}_t$), the fitting term would reduce to $\frac{1}{2}\|\bm{K}(\bm{h}-\bm{a})\|_2^2$, trivially collapsing $\bm{h}^\star$ toward the sequence-invariant static vector $\bm{a}$. By preserving token norms in the targets, no constant vector can fit varying magnitudes, forcing $\bm{h}^{\star}$ to dynamically integrate both token semantics and their relative importance.
The targets are derived entirely from the input, so fitting $\bm{h}^{\star}$ requires no additional labels. 
Optionally, targets can be further enriched by first applying a local convolution (1D for language and 2D for vision) to aggregate neighboring context before the projection, \textit{i.e.}, $v_t = \bm{a}^\top \operatorname{Conv}(\bm{X})$, where $\operatorname{Conv}(\cdot)$ denotes the convolution operator and $\bm{X}$ represents the local context window of $n$ neighboring tokens around position $t$. Alternative parameterizations of $v_t$ are explored in the Appendix~\ref{sec:v_ablation}.

\textbf{Uniform Token Contributions.}
We use uniform weights $\omega_t=1$, assigning the same coefficient to every token's regression error.
Let $\bm{K}=[\bm{k}_1,\ldots,\bm{k}_T]^\top\in\mathbb{R}^{T\times d}$ and $\bm{v}=[v_1,\ldots,v_T]^\top\in\mathbb{R}^{T}$.
Setting the gradient of Eq.~\ref{eq:ttr_objective} to zero gives us a closed-form solution:
\begin{equation}
    (\bm{K}^\top\bm{K}+\lambda\bm{I}_d)\bm{h}^{\star}
    = \bm{K}^\top\bm{v},
    \qquad
    \bm{h}^{\star}
    =
    (\bm{K}^\top\bm{K}+\lambda\bm{I}_d)^{-1}
    \bm{K}^\top\bm{v}.
    \label{eq:ttr_primal}
\end{equation}
Since $\lambda>0$, the objective is strictly convex and has a unique solution.
The solution combines the key-target statistic $\bm{K}^\top\bm{v}$ with the sequence-dependent feature geometry $\bm{K}^\top\bm{K}$.

\textbf{Learnable Diagonal Regularization.}
\label{sec:ttr_diagonal}
In conventional ridge regression, $\lambda$ is a tunable hyperparameter. Such tuning is expensive during large-model pretraining and complicates scaling to different model sizes. To avoid this, we replace $\lambda\bm{I}_d$ with a learnable positive diagonal matrix (meta-learned via the task loss), yielding a generalized ridge regression objective~\citep{tikhonov_solutions}:
\begin{equation}
    \bm{D}
    =
    \operatorname{diag}(\delta_1,\ldots,\delta_d),
    \qquad
    \delta_j
    =
    \operatorname{softplus}(\rho_j),
    \label{eq:ttr_diagonal}
\end{equation}
where $\bm{\rho}\in\mathbb{R}^{d}$ is trainable.
The resulting objective and its solution are:
\begin{align}
    \bm{h}^{\star}
    &=
    \arg\min_{\bm{h}}
    \frac{1}{2}\|\bm{K}\bm{h}-\bm{v}\|_2^2
    +
    \frac{1}{2}\bm{h}^\top\bm{D}\bm{h},
    \label{eq:ttr_diagonal_objective}\\
    &=
    (\bm{K}^\top\bm{K}+\bm{D})^{-1}
    \bm{K}^\top\bm{v},
    \label{eq:ttr_diagonal_primal}
\end{align}
which solves a regularized linear system that takes the pairwise interaction of tokens into account.

\vspace{-1mm}
\subsection{Efficient Dual-form Solution}
\label{sec:ttr_dual}
\vspace{-.5mm}

Directly evaluating Eq.~\ref{eq:ttr_primal} requires solving a $d\times d$ linear system.
When the number of tokens being compressed is typically smaller than the feature dimension, we instead use the equivalent dual form~\citep{ridge_dual} in order to improve the computational efficiency:
\begin{equation}
    \bm{h}^{\star}
    =
    \bm{K}^\top
    (\bm{K}\bm{K}^\top+\lambda\bm{I}_T)^{-1}
    \bm{v}.
    \label{eq:ttr_dual}
\end{equation}
This replaces the feature-space system with a $T\times T$ token-space system.
Its entries depend on pairwise similarities between regression keys, allowing the fitted representation to account for correlations among tokens.
We evaluate Eq.~\ref{eq:ttr_dual} by solving
$(\bm{K}\bm{K}^\top+\lambda\bm{I}_T)\bm{\alpha}=\bm{v}$
and computing $\bm{h}^{\star}=\bm{K}^\top\bm{\alpha}$, without explicitly forming an inverse.
The resulting computational cost is
$\mathcal{O}(T^2d+T^3)$, compared with
$\mathcal{O}(Td^2+d^3)$ for the direct primal formulation.
The dual form is therefore particularly suitable for compressing short sequences of high-dimensional features.

\textbf{Obtaining an Efficient Dual Solution.}
With the learnable diagonal regularization introduced in Section~\ref{sec:ttr_construction}, $\lambda\bm{I}_d$ is replaced by $\bm{D}$. To obtain an efficient dual solution, we rescale the keys as $\widetilde{\bm{K}}=\bm{K}\bm{D}^{-1/2}$, yielding the final form of TTR-Compressor:
\begin{equation}
    \boxed{
    \operatorname{TTR}(\bm{X})
    =
    \bm{D}^{-1/2}\widetilde{\bm{K}}^\top
    \big(
        \bm{I}_T+
        \widetilde{\bm{K}}\widetilde{\bm{K}}^\top
    \big)^{-1}
    \bm{v}
    }
    \label{eq:ttr_final}
\end{equation}
Because $\bm{D}$ is diagonal, rescaling the keys costs only $\mathcal{O}(Td)$, preserving the dual formulation's overall complexity.
In practice, we compute a Cholesky factorization
$\bm{L}\bm{L}^\top
=\bm{I}_T+\widetilde{\bm{K}}\widetilde{\bm{K}}^\top$,
solve $\bm{L}\bm{L}^\top\bm{\alpha}=\bm{v}$ through triangular solutions, and return
$\bm{D}^{-1/2}\widetilde{\bm{K}}^\top\bm{\alpha}$.

\vspace{-.5mm}
\subsection{Fully End-to-end Training}
\label{sec:ttr_learning}
\vspace{-.5mm}

The target projection $\bm{a}$ and diagonal parameters $\bm{\rho}$ are optimized end-to-end through the differentiable closed-form solution using the original model's training objective.
Each compressor maintains its own parameters, which are shared across all input sequences. At inference time, these learned parameters remain fixed, while $\bm{h}^{\star}$ is recomputed independently for each input sequence. 
TTR-Compressor therefore combines a learned regression objective with an exact, input-adaptive solution, without requiring iterative gradient-based optimization at test time.

\vspace{-1mm}
\section{Understanding TTR-Compressor}
\label{sec:understanding_ttr}
\vspace{-1mm}

The learning process of TTR-Compressor can be decomposed into two distinct mechanisms: outer-loop training learns which regression problem is useful for compression, while the inner solution adapts the compressed representation to the current input's statistics. We examine them separately.

\vspace{-.5mm}
\subsection{Task-Supervised Learning of the Compression Objective}
\label{sec:ttr_outer_learning}
\vspace{-.5mm}

The regression objective is not prescribed independently of the task. Its targets $v_t=\bm{a}^\top\bm{x}_t$ and regularization matrix $\bm{D}$ are learned through the downstream use of the compressed representation. Writing $\bm{h}^{\star}_{\bm{a},\bm{\rho}}$ for the regression solution, the training objective takes the form:
\begin{equation}
    \min_{\theta,\bm{a},\bm{\rho}}
    \mathbb{E}_{(x,y)}
    \left[
        \mathcal{L}_{\mathrm{task}}
        \left(
            F_{\theta}\!\left(
                \bm{h}^{\star}_{\bm{a},\bm{\rho}}
                (\bm{X}_{\theta}(x))
            \right),
            y
        \right)
    \right],
    \label{eq:ttr_outer_objective}
\end{equation}
where $\bm{X}_{\theta}$ produces the input representations and $F_{\theta}$ denotes the subsequent computation. Thus, the outer objective evaluates the usefulness of $\bm{h}^{\star}$ for the final task, rather than leveraging its learnability to merely minimize the inner regression error.

\textbf{Learning the Targets and Regularization.}
This coupling can be seen directly from the gradients.
Let
$\bm{A}=\bm{K}^{\top}\bm{K}+\bm{D}$,
$\bm{g}=\nabla_{\bm{h}^{\star}}\mathcal{L}_{\mathrm{task}}$,
and $\bm{s}=\bm{A}^{-1}\bm{g}$.
Differentiating the regression solution gives:
\begin{equation}
    \nabla_{\bm{a}}\mathcal{L}_{\mathrm{task}}
    =
    \bm{X}^{\top}\bm{K}\bm{s},
    \qquad
    \nabla_{\bm{\rho}}\mathcal{L}_{\mathrm{task}}
    =
    -\operatorname{sigmoid}(\bm{\rho})
    \odot\bm{h}^{\star}\odot\bm{s},
    \label{eq:ttr_outer_gradients}
\end{equation}
where $\odot$ denotes element-wise multiplication. The downstream gradient therefore shapes both the scalar targets derived from tokens via $\bm{a}$ and the coordinate-wise penalties imposed through $\bm{\rho}$. Consequently, the scalar targets require no predefined semantics; instead, they serve as learned measurements whose fitted coefficients yield informative representations for downstream layers. In this sense, the regression provides a structural inductive bias for compression, while outer-loop optimization steers this structural bias towards better handling the downstream task.

\vspace{-.5mm}
\subsection{Input-Dependent Second-Order Adaptation}
\label{sec:ttr_spectral}
\vspace{-.5mm}

Beyond learning the regression objective, the inner solve adapts the
compressed representation to the correlation structure of each input.
For isotropic regularization, the dual-form solution is
\begin{equation}
    \bm h^\star
    =
    \bm K^\top
    (\bm K \bm K^\top + \lambda \bm I)^{-1}
    \bm v.
\end{equation}
Let
    $\bm G = \bm K\bm K^\top \in \mathbb{R}^{T\times T}.$
Since the keys are normalized, each entry
$G_{ij}=\bm k_i^\top\bm k_j$ directly measures the similarity between
tokens $i$ and $j$.
Thus, $\bm G$ captures the input-dependent second-order correlation
structure of the token group.

The compressed representation can equivalently be written as
\begin{equation}
    \bm h^\star = \bm K^\top \bm\alpha,
    \qquad
    \bm\alpha=(\bm G+\lambda\bm I)^{-1}\bm v.
\end{equation}
Unlike pooling and projection, where token contributions are aggregated independently,
the effective weights $\bm\alpha$ are determined jointly by the target
responses and all pairwise token correlations.
Highly similar tokens are explicitly coupled through the Gram matrix,
so redundant evidence exhibits diminishing rather than additive
influence.
Conversely, information not already explained by strongly
correlated tokens can retain a larger relative influence when supported
by the regression targets, as shown in Figure~\ref{fig:method}(b).
TTR therefore adapts its compression rule to the second-order structure
of every input rather than applying a fixed or independently weighted
combination of tokens.

\textbf{Learning the inter-token geometry.}
With the learnable diagonal regularizer $\bm D$, this interpretation
extends to the rescaled keys
$\widetilde{\bm K}=\bm K\bm D^{-1/2}$.
The corresponding token Gram matrix becomes
\begin{equation}
    \widetilde{\bm G}
    =
    \widetilde{\bm K}\widetilde{\bm K}^\top
    =
    \bm K\bm D^{-1}\bm K^\top.
\end{equation}
Training therefore learns which feature dimensions should matter more
when measuring token similarity.
The outer loop learns a task-adapted correlation geometry, while the
inner solve adapts the compressed representation to the correlation
structure of each individual input.

\vspace{-1mm}
\section{Applications and Experimental Results}
\label{sec:experiments}
\vspace{-1mm}

We evaluate TTR-Compressor in two settings: block-wise sparse attention for efficient language modeling, where compressed representations guide context selection, and visual token merging for efficient multimodal understanding, where they serve as the visual input to the language backbone. 

\vspace{-.5mm}
\subsection{Block-wise Sparse Attention}
\label{sec:exp_sparse}
\vspace{-.5mm}

\textbf{Architecture and Baselines.}
Block-wise sparse attention partitions the context into blocks, scores their compressed representations against each query, and attends to tokens in selected blocks alongside a local window. The compressor therefore determines which distant information remains accessible: an uninformative summary can prevent a relevant block from being selected, regardless of the quality of its individual token representations.

We adopt HiLS-Attention~\citep{hils} as a base framework and vary only the compressor. HiLS-Attention has three effective architectural components: hierarchical softmax attention, which combines within-block attention with learned block-level weights; low-rank query calibration, which adapts queries for scoring compressed representations; and HoPE, which applies RoPE to only a subset of feature dimensions to support length extrapolation. We compare four compressors: mean pooling, concat \& projection, the attention-weighted pooling used in the original HiLS-Attention, and TTR-Compressor. Full attention and sliding-window attention serve as additional references.

\textbf{Training and Evaluation.}
We train 345M-parameter language models on the FineWeb dataset~\citep{fineweb} with an 8K context length. Training runs for 30,000 optimization steps with a global batch size of 128 sequences, corresponding to approximately 30B tokens, and uses AdamW with a learning rate of $3\times10^{-4}$. All sparse attention variants use 64-token blocks, select up to 32 distant blocks per query, and retain a 512-token local window.

Evaluation covers two complementary aspects. First, since queries use the compressed representations to retrieve relevant blocks, retrieval accuracy directly reflects compression quality. Following~\citep{GDN2}, we evaluate on the single-needle (S-NIAH) and multi-key (MK-NIAH) retrieval tasks from RULER~\citep{ruler}, which require locating relevant information among distractors, as the most direct and demanding test of the compressor.  We evaluate at 4K--32K context lengths, including extrapolation beyond the 8K training length. Additionally, LAMBADA accuracy and the zero-shot benchmarks in Table~\ref{tab:commonsense} assess contextual prediction, commonsense reasoning, and question answering.

\begin{table*}[t]
\centering
\scriptsize
\renewcommand{\arraystretch}{1.26}
\setlength{\tabcolsep}{3pt}
\setlength{\abovecaptionskip}{5pt}
\setlength{\belowcaptionskip}{3pt}
\newcommand{\gr}[1]{\textcolor{gray}{#1}}
\resizebox{\linewidth}{!}{%
\begin{tabular}{lccccccccccccccccc}
\multirow{2}{*}{\textbf{Model}} & \multicolumn{4}{c}{\textbf{S-NIAH-1}} & \multicolumn{4}{c}{\textbf{S-NIAH-2}} & \multicolumn{4}{c}{\textbf{S-NIAH-3}} & \multicolumn{4}{c}{\textbf{MK-NIAH-1}} & \multirow{2}{*}{\textbf{Avg.}}\\
 & 4K & 8K & 16K & 32K & 4K & 8K & 16K & 32K & 4K & 8K & 16K & 32K & 4K & 8K & 16K & 32K &\\
\shline
 \gr{Full Attention}
 & \gr{0.834} & \gr{0.428} & \gr{0} & \gr{0} & \gr{0.850} & \gr{0.412} & \gr{0} & \gr{0} & \gr{0.572} & \gr{0.470} & \gr{0} & \gr{0} & \gr{0.374} & \gr{0.272} & \gr{0} & \gr{0} & \gr{0.2632}\\
 \gr{Sliding Window Attention}
 & \gr{0.116} & \gr{0.050} & \gr{0.044} & \gr{0.016} & \gr{0.148} & \gr{0.048} & \gr{0.034} & \gr{0.030} & \gr{0.112} & \gr{0.094} & \gr{0.030} & \gr{0.034} & \gr{0.126} & \gr{0.080} & \gr{0.026} & \gr{0.028} & \gr{0.0635}\\
 Mean Pooling
 & 0.938 & 0.536 & 0.470 & 0.270 & 0.840 & 0.258 & 0.042 & 0.050 & 0.292 & 0.282 & 0.040 & 0.042 & 0.366 & 0.230 & 0.042 & 0.056 & 0.2971\\
 Concat \& Projection
 & 0.892 & 0.840 & 0.792 & 0.582 & 0.764 & 0.232 & 0.124 & 0.068 & 0.512 & 0.324 & \underline{0.174} & \underline{0.076} & 0.476 & 0.278 & 0.084 & 0.052 & 0.3919\\
 Attention-Weighted Pooling
 & 0.944 & 0.914 & 0.876 & \underline{0.740} & 0.926 & 0.274 & \underline{0.136} & 0.060 & 0.340 & 0.082 & 0.026 & 0.026 & 0.474 & 0.258 & \underline{0.124} & 0.072 & 0.3920\\
\rowcolor{Gray}
 TTR-Compressor
 & \underline{0.996} & \underline{0.996} & \underline{0.992} & 0.650 & \textbf{0.982} & \textbf{0.566} & \textbf{0.288} & \textbf{0.126} & \underline{0.770} & \underline{0.376} & \textbf{0.188} & 0.066 & \underline{0.528} & \textbf{0.302} & \textbf{0.212} & \textbf{0.110} & \underline{0.5092}\\
\rowcolor{Gray}
 TTR-Compressor (w/ Conv)
 & \textbf{0.998} & \textbf{1.000} & \textbf{0.994} & \textbf{0.988} & \underline{0.960} & \underline{0.454} & 0.128 & \underline{0.102} & \textbf{0.834} & \textbf{0.482} & 0.130 & \textbf{0.096} & \textbf{0.542} & \underline{0.298} & 0.116 & \underline{0.108} & \textbf{0.5144}\\
\end{tabular}
}
\caption{\footnotesize Accuracy on Single Needle-In-A-Haystack (S-NIAH) and Multi-Key Needle-In-A-Haystack (MK-NIAH) tasks from RULER across varied context lengths. Best values for each context length are \textbf{bolded}, and second-best values are \underline{underlined}.}
\label{tab:ruler}
\vspace{-.5mm}
\end{table*}

\begin{table*}[t]
\centering
\scriptsize
\renewcommand{\arraystretch}{1.23}
\setlength{\tabcolsep}{6pt}
\setlength{\abovecaptionskip}{5pt}
\setlength{\belowcaptionskip}{3pt}
\newcommand{\gr}[1]{\textcolor{gray}{#1}}
\resizebox{\linewidth}{!}{%
\begin{tabular}{lccccccccccc}
\multirow{2}{*}{\textbf{Model}} & \textbf{ARC-c} & \textbf{ARC-e} & \textbf{BoolQ} & \textbf{COPA} & \textbf{Hella.} & \textbf{LMB.} & \textbf{OBQA} & \textbf{PIQA} & \textbf{SciQ} & \textbf{Wino.} & \textbf{Avg.}\\
 & acc $\uparrow$ & acc $\uparrow$ & acc $\uparrow$ & acc $\uparrow$ & acc $\uparrow$ & acc $\uparrow$ & acc $\uparrow$ & acc $\uparrow$ & acc $\uparrow$ & acc $\uparrow$ & acc $\uparrow$\\
\shline
 \gr{Full Attention}
 & \gr{24.74} & \gr{42.09} & \gr{60.15} & \gr{68.00} & \gr{39.85} & \gr{35.71} & \gr{29.60} & \gr{67.74} & \gr{62.70} & \gr{51.07} & \gr{48.16}\\
 \gr{Sliding Window Attention}
 & \gr{24.91} & \gr{41.96} & \gr{55.02} & \gr{69.00} & \gr{39.53} & \gr{35.47} & \gr{29.00} & \gr{67.14} & \gr{63.00} & \gr{50.75} & \gr{47.58}\\
 Mean Pooling
 & 24.66 & \textbf{42.38} & 55.99 & 66.00 & \underline{39.62} & 35.88 & 28.40 & \textbf{69.31} & \textbf{65.50} & \textbf{53.04} & 48.08\\
 Concat \& Projection
 & \underline{24.83} & \underline{42.30} & 58.65 & 68.00 & 39.58 & \textbf{36.62} & \underline{30.00} & 67.74 & 62.20 & \underline{51.78} & \textbf{48.17}\\
 Attention-Weighted Pooling
 & \textbf{25.00} & 41.75 & \textbf{61.22} & 65.00 & 39.61 & 35.61 & 29.40 & \underline{68.93} & \underline{63.80} & 50.51 & 48.08\\
\rowcolor{Gray}
 TTR-Compressor
 & 23.29 & 41.58 & 57.95 & \textbf{71.00} & 39.41 & \underline{36.27} & \textbf{30.20} & \textbf{69.31} & 63.00 & 49.57 & \underline{48.16}\\
\rowcolor{Gray}
 TTR-Compressor (w/ Conv)
 & 24.15 & 41.33 & \underline{59.94} & \underline{70.00} & \textbf{39.63} & 35.36 & \textbf{30.20} & 67.85 & 62.80 & 49.96 & 48.12\\
\end{tabular}
}
\caption{\footnotesize Performance comparison on language modeling and zero-shot common-sense reasoning. The evaluation context lengths are too short to activate sparse attention, so the compressor is not involved during inference; these results therefore only reflect the common-sense reasoning ability acquired during language model pretraining. Note that Concat \& Projection has significantly more parameters than the others. Best values are \textbf{bolded}, and second-best values are \underline{underlined}.}
\label{tab:commonsense}
\vspace{-2.5mm}
\end{table*}

\textbf{Results.} As shown in Table~\ref{tab:ruler}, the choice of compressor largely determines retrieval quality. TTR-Compressor outperforms all baselines by a large margin, and its advantage persists when extrapolating to 16K and 32K. We attribute this to its second-order modeling: pooling and projection add up tokens independently, so the redundant context that fills most of a block drowns out the few informative tokens. The Gram matrix in TTR-Compressor fits correlated tokens jointly and reduces the influence of redundant evidence, so the resulting key keeps the distinctive content that queries need to find. The convolutional variant brings a further gain, reaching 12.24 points over the strongest baseline, as convolution makes the regression targets more informative and thus improves compression quality. On the benchmarks in Table~\ref{tab:commonsense}, the evaluation contexts are too short to activate sparse attention, so these results only reflect the common-sense reasoning ability acquired during pretraining. Concat \& Projection has substantially more parameters than the other compressors, which may benefit pretraining. TTR-Compressor performs well and on par with full attention.

\vspace{-.5mm}
\subsection{Visual Token Merger}
\label{sec:exp_visual}
\vspace{-.5mm}

\textbf{Architecture and Baselines.}
High-resolution images produce many visual tokens, making their direct processing by a language backbone expensive. Modern multimodal models therefore commonly use a spatial merger to reduce the visual sequence length. A widely adopted design concatenates each $2\times2$ group of neighboring visual features and maps it to one token through a two-layer MLP~\citep{internvl35,qwen3vl,qwen35,kimik3}. Longer multimodal contexts and agentic workflows with repeated visual observations further motivate stronger compression, such as $3\times3$ merging~\citep{gemma4,deepseekv41flash}.

We compare TTR-Compressor with the standard MLP merger. TTR-Compressor fits one regression per spatial group to compress its four or nine visual features into a single vector, which a two-layer MLP then projects to the language model's hidden dimension. Its convolutional variant applies a $3\times3$ depthwise convolution to the feature map before constructing the regression targets.

\textbf{Training and Evaluation.}
We use the ViT of Qwen3-VL~\citep{qwen3vl} as the vision encoder and Qwen3-4B-Instruct~\citep{qwen3} as the language model. Training consists of two stages. The first stage trains only the merger modules on LLaVA-558K~\citep{improvedllava} for one epoch with a learning rate of $10^{-3}$. The second stage jointly trains the merger modules and language backbone on 1.3M instruction samples mixing image~\citep{llavanext}, multi-image~\citep{llavanext-interleave}, and video~\citep{llavavideo} data for one epoch with a learning rate of $2\times10^{-5}$.

We evaluate on 4 types of benchmarks: general visual question answering (MMBench~\citep{mmbench}, MMStar~\citep{mmstar}, RealWorldQA~\citep{grok15v}), knowledge-intensive multimodal reasoning (MMMU~\citep{mmmu}, AI2D~\citep{ai2d}), text, infographic, and chart understanding (OCRBench~\citep{ocrbench}, InfoVQA~\citep{infovqa}, ChartQA~\citep{chartqa}), and spatial reasoning (MMSI~\citep{mmsi}, ERQA~\citep{erqa}).

\begin{table*}[t]
\centering
\scriptsize
\renewcommand{\arraystretch}{1.23}
\setlength{\tabcolsep}{3pt}
\setlength{\abovecaptionskip}{5pt}
\setlength{\belowcaptionskip}{3pt}
\resizebox{\linewidth}{!}{%
\begin{tabular}{lccccccccccc}
\multirow{2}{*}{\textbf{Method}}
& \multicolumn{3}{c}{\textbf{General VQA}}
& \multicolumn{2}{c}{\textbf{Knowledge Reasoning}}
& \multicolumn{3}{c}{\textbf{Document Understanding}}
& \multicolumn{2}{c}{\textbf{Spatial Under.}}
& \multirow{2}{*}{\textbf{Avg.}} \\
& \textbf{MMBench}
& \textbf{MMStar}
& \textbf{RWQA}
& \textbf{MMMU}
& \textbf{AI2D}
& \textbf{OCRBench}
& \textbf{InfoVQA}
& \textbf{ChartQA}
& \textbf{MMSI}
& \textbf{ERQA}
& \\
\shline

\multicolumn{12}{l}{\textit{$2\times2$ Token Compression}} \\

MLP
& \underline{76.63}
& 49.93
& \underline{64.44}
& \textbf{46.22}
& 75.49
& 62.80
& 41.72
& 70.64
& 26.20
& 35.25
& 54.93 \\

\rowcolor{Gray}
TTR-Compressor
& \textbf{76.89}
& \textbf{53.94}
& 63.40
& 45.44
& \textbf{76.94}
& \underline{62.90}
& \underline{43.74}
& \textbf{73.04}
& \textbf{26.90}
& \textbf{36.00}
& \underline{55.92} \\

\rowcolor{Gray}
TTR-Compressor (w/ Conv)
& 75.86
& \underline{52.26}
& \textbf{66.93}
& \underline{45.67}
& \underline{76.65}
& \textbf{63.80}
& \textbf{44.33}
& \underline{72.52}
& \underline{26.30}
& \textbf{36.00}
& \textbf{56.03} \\

\hline
\multicolumn{12}{l}{\textit{$3\times3$ Token Compression}} \\

MLP
& 74.74
& 50.89
& 64.31
& 44.44
& 74.68
& 55.90
& 35.16
& 63.60
& 26.00
& 33.00
& 52.27 \\

\rowcolor{Gray}
TTR-Compressor
& \underline{76.46}
& \textbf{52.40}
& \underline{65.62}
& \underline{45.89}
& \textbf{75.23}
& \underline{58.70}
& \underline{38.27}
& \textbf{67.16}
& \underline{26.80}
& \textbf{36.50}
& \underline{54.30} \\

\rowcolor{Gray}
TTR-Compressor (w/ Conv)
& \textbf{76.63}
& \underline{51.16}
& \textbf{65.88}
& \textbf{47.44}
& \underline{75.16}
& \textbf{60.00}
& \textbf{39.70}
& \textbf{67.16}
& \textbf{28.90}
& \textbf{36.50}
& \textbf{54.85} \\

\end{tabular}
}
\caption{\footnotesize Evaluation across multimodal benchmarks under different token compression ratios. Results are grouped by $2\times2$ and $3\times3$ token compression. Best values within each compression group are \textbf{bolded}, and second-best values are \underline{underlined}.}
\label{tab:mllm_compression}
\vspace{-3mm}
\end{table*}

\textbf{Results.} As shown in Table~\ref{tab:mllm_compression}, TTR-Compressor outperforms the MLP merger under both compression ratios. Under $2\times2$ merging, it raises the average from $54.93$ to $55.92$, and its convolutional variant further improves it to $56.03$. The advantage grows under stronger $3\times3$ merging. The gains are most pronounced on text-rich benchmarks, where fine-grained details are easily lost during merging: under $3\times3$ merging, the convolutional variant improves over the MLP merger by $4.10$ on OCRBench, $4.54$ on InfoVQA, and $3.56$ on ChartQA. On average, it outperforms the MLP merger by $2.58$ points. Notably, TTR-Compressor (w/ Conv) with $3\times3$ merging achieves an average of $54.85$, close to the $54.93$ of the MLP merger with $2\times2$ merging, while using fewer than half as many visual tokens.
We attribute the strong performance of TTR-Compressor to its second-order modeling. The MLP merger concatenates the features in each group and maps them with fixed weights, so the mixing does not depend on how the features relate to each other. When most features in a group are redundant, such as background patches, they dominate the merged token and wash out fine-grained details. In contrast, TTR-Compressor fits correlated features jointly and reduces the influence of redundant ones, so the merged token keeps the distinctive content within the group.

\vspace{-.5mm}
\subsection{Efficiency Analysis}
\vspace{-.5mm}

\begin{wraptable}{r}{0.42\textwidth}
\vspace{-16pt}
\centering
\scriptsize
\renewcommand{\arraystretch}{1.23}
\setlength{\tabcolsep}{7pt}
\setlength{\abovecaptionskip}{5pt}
\setlength{\belowcaptionskip}{3pt}
\begin{tabular}{lcc}
\textbf{Paradigm}
& \textbf{Complexity}
& \textbf{Params} \\
\shline

Mean Pooling
& $\mathcal{O}(Td)$
& $0$ \\

Linear Projection
& $\mathcal{O}(Td^2)$
& $Td^2$ \\

Attention (Reusing)
& $\mathcal{O}(Td)$
& $d$ \\

Attention (Standalone)
& $\mathcal{O}(Td^2)$
& $\sim 4d^2$ \\

\rowcolor{Gray}
TTR-Compressor
& $\mathcal{O}(T^2d)$
& $2d$ \\

\end{tabular}
\vspace{-0.25mm}
\caption{\footnotesize Complexity comparison of different token compression methods ($T \ll d$).}
\label{tab:complexity_comparison}
\vspace{-16pt}
\end{wraptable}
\textbf{Theoretical Complexity and Parameter Efficiency.}
For a sequence $X \in \mathbb{R}^{T \times d}$ compressed into $h \in \mathbb{R}^d$, Table~\ref{tab:complexity_comparison} summarizes the asymptotic complexity and parameter count across different paradigms. Under the targeted regime where $T \ll d$ (e.g., $T \in \{4, 9, 64\}$ vs. $d \ge 1024$), TTR-Compressor fundamentally avoids the $\mathcal{O}(d^2)$ overhead and delivers strong, data-dependent expressiveness with only $2d$ parameters.

\begin{wrapfigure}{r}{0.56\textwidth}
  \vspace{-12pt} 
  \centering
  \includegraphics[width=0.56\textwidth]{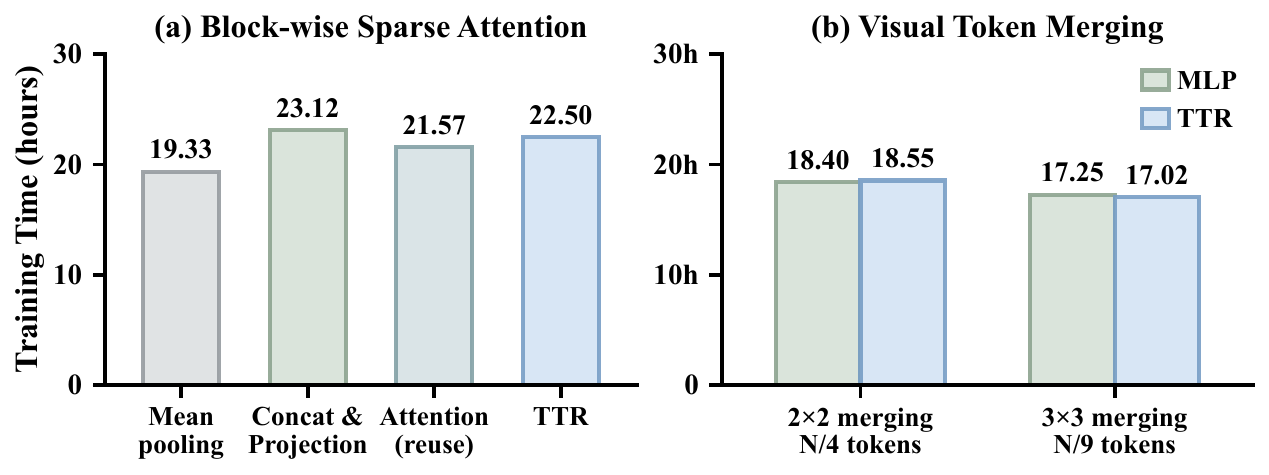}
  \vspace{-20pt} 
  \caption{\footnotesize Wall-clock training time comparisons.}
  \label{fig:wall_clock_time}
  \vspace{-7pt} 
\end{wrapfigure}
\textbf{Empirical Training Efficiency.}
Following the setups in Sections~\ref{sec:exp_sparse} and \ref{sec:exp_visual}, Figure~\ref{fig:wall_clock_time} benchmarks the actual wall-clock training efficiency. For block-wise sparse attention ($8\times\text{H100}$ GPUs), while mean pooling is the most efficient, its representational capacity is inherently limited; other compression variants exhibit comparable training times. For visual token merging ($2\times 8\times\text{H100}$ GPUs), TTR-Compressor achieves efficiency on par with the widely used MLP. Notably, while marginally slower under $2\times 2$ merging, TTR-Compressor outpaces MLP under $3\times 3$ merging, highlighting its superior efficiency and scalability at higher compression ratios.

\vspace{-.5mm}
\subsection{Ablation Study and Analysis}
\vspace{-.5mm}
\begin{table*}[t]
\centering
\scriptsize
\renewcommand{\arraystretch}{1.27}
\setlength{\tabcolsep}{3pt}
\setlength{\abovecaptionskip}{4pt}
\setlength{\belowcaptionskip}{6pt}
\vspace{-1mm}
\resizebox{\linewidth}{!}{%
\begin{tabular}{lccccccccccccccccc}
\multirow{2}{*}{\textbf{Model}}
& \multicolumn{4}{c}{\textbf{S-NIAH-1}}
& \multicolumn{4}{c}{\textbf{S-NIAH-2}}
& \multicolumn{4}{c}{\textbf{S-NIAH-3}}
& \multicolumn{4}{c}{\textbf{MK-NIAH-1}}
& \multirow{2}{*}{\textbf{Avg.}} \\
& 4K & 8K & 16K & 32K
& 4K & 8K & 16K & 32K
& 4K & 8K & 16K & 32K
& 4K & 8K & 16K & 32K
& \\
\shline

TTR-Compressor
& \textbf{0.996}
& \textbf{0.996}
& \textbf{0.992}
& \textbf{0.650}
& \textbf{0.982}
& \textbf{0.566}
& \textbf{0.288}
& \textbf{0.126}
& \textbf{0.770}
& 0.376
& \textbf{0.188}
& 0.066
& \textbf{0.528}
& \textbf{0.302}
& \textbf{0.212}
& 0.110
& \textbf{0.5092} \\

\quad w/o normalize $\mathbf{k}$
& 0.010
& 0.000
& 0.002
& 0.000
& 0.938
& 0.254
& 0.136
& 0.104
& 0.764
& \textbf{0.592}
& 0.186
& \textbf{0.092}
& 0.500
& 0.258
& 0.098
& \textbf{0.116}
& 0.2531 \\

\end{tabular}
}
\caption{\footnotesize Ablation study on key normalization in TTR-Compressor. We compare the full model with a variant without normalizing $\mathbf{k}$ (w/o normalize $\mathbf{k}$). Best values for each context length are \textbf{bolded}.}
\label{tab:ablate_k_norm}
\vspace{-1mm}
\end{table*}

\textbf{Criticality of Key Normalization.}
As shown in Table~\ref{tab:ablate_k_norm}, we find that omitting key normalization ($\mathbf{k}_t = \mathbf{x}_t / \|\mathbf{x}_t\|_2$) precipitates a collapse across retrieval tasks. We attribute this to two reasons.
First, from an optimization perspective, since scalar targets are linearly generated via $v_t = \mathbf{a}^\top \mathbf{x}_t$, using raw keys ($\mathbf{k}_t = \mathbf{x}_t$) makes the static weight $\mathbf{a}$ a trivial exact fit ($\mathbf{k}_t^\top \mathbf{a} \equiv v_t$). 
Consequently, the regularized solution $\mathbf{h}^\star$ is biased toward the fixed vector $\mathbf{a}$, which weakens the instance adaptivity of test-time regression and pushes it toward an input-invariant projection.
Second, from a geometric and numerical viewpoint, outlier tokens with large norms dominate the Gram matrix $\mathbf{K}\mathbf{K}^\top$, causing correlation modeling to reflect token norms rather than semantic similarity.
Normalization resolves both issues at once: it breaks the trivial alignment with static parameters and prevents large-norm tokens from dominating the Gram matrix.

\begin{table*}[t]
\centering
\scriptsize
\renewcommand{\arraystretch}{1.27}
\setlength{\tabcolsep}{3pt}
\setlength{\abovecaptionskip}{5pt}
\setlength{\belowcaptionskip}{3pt}
\resizebox{\linewidth}{!}{%
\begin{tabular}{lccccccccccccccccc}
\multirow{2}{*}{\textbf{Model}}
& \multicolumn{4}{c}{\textbf{S-NIAH-1}}
& \multicolumn{4}{c}{\textbf{S-NIAH-2}}
& \multicolumn{4}{c}{\textbf{S-NIAH-3}}
& \multicolumn{4}{c}{\textbf{MK-NIAH-1}}
& \multirow{2}{*}{\textbf{Avg.}} \\
& 4K & 8K & 16K & 32K
& 4K & 8K & 16K & 32K
& 4K & 8K & 16K & 32K
& 4K & 8K & 16K & 32K
& \\
\shline

$\lambda=0.0001$
& 0.986
& 0.896
& 0.826
& 0.628
& 0.888
& 0.348
& 0.134
& 0.054
& 0.654
& \underline{0.528}
& 0.086
& \underline{0.108}
& 0.384
& 0.242
& 0.118
& \underline{0.102}
& 0.4364 \\

$\lambda=0.001$
& \underline{0.998}
& \underline{0.996}
& 0.990
& 0.948
& \textbf{0.998}
& 0.448
& \underline{0.186}
& \underline{0.112}
& 0.686
& 0.420
& 0.186
& 0.072
& 0.466
& 0.274
& \underline{0.174}
& 0.082
& \underline{0.5023} \\

$\lambda=0.01$
& \textbf{1.000}
& \textbf{1.000}
& \textbf{1.000}
& \textbf{1.000}
& 0.936
& \underline{0.486}
& 0.148
& 0.076
& 0.692
& 0.374
& 0.042
& 0.022
& 0.484
& \underline{0.288}
& 0.146
& 0.068
& 0.4851 \\

$\lambda=0.1$
& 0.990
& 0.986
& 0.982
& \underline{0.978}
& 0.932
& 0.312
& 0.054
& 0.068
& 0.728
& 0.424
& 0.088
& 0.078
& 0.470
& 0.252
& 0.112
& 0.074
& 0.4705 \\

$\lambda=1$
& 0.872
& 0.552
& 0.826
& 0.794
& 0.896
& 0.204
& 0.062
& 0.072
& \underline{0.772}
& 0.496
& 0.184
& 0.088
& 0.526
& 0.278
& 0.090
& 0.068
& 0.4238 \\

Learnable Scalar
& 0.988
& 0.986
& 0.970
& 0.806
& 0.938
& 0.368
& 0.104
& 0.104
& \textbf{0.788}
& \textbf{0.536}
& \textbf{0.212}
& \textbf{0.112}
& \textbf{0.554}
& 0.274
& 0.132
& 0.082
& 0.4971 \\

Learnable Diagonal
& 0.996
& \underline{0.996}
& \underline{0.992}
& 0.650
& \underline{0.982}
& \textbf{0.566}
& \textbf{0.288}
& \textbf{0.126}
& 0.770
& 0.376
& \underline{0.188}
& 0.066
& \underline{0.528}
& \textbf{0.302}
& \textbf{0.212}
& \textbf{0.110}
& \textbf{0.5092} \\

\end{tabular}
}
\caption{\footnotesize Ablation study on the regularization parameter $\lambda$ in TTR-Compressor. Best values for each context length are \textbf{bolded}, and second-best values are \underline{underlined}.}
\label{tab:ablate_regularization}
\vspace{-3mm}
\end{table*}

\textbf{From Static Heuristics to Anisotropic Regularization.}
Table~\ref{tab:ablate_regularization} compares fixed regularization scalars against learnable formulations. All fixed settings outperform the baseline. Moderate values around $\lambda=10^{-2}$ perform well, while performance decreases when $\lambda$ is too small ($0.4364$ at $\lambda=10^{-4}$) or too large ($0.4238$ at $\lambda=1$).
Notably, a \emph{Learnable Scalar} already achieves a competitive average accuracy of $0.4971$, demonstrating that the outer-loop task objective can reliably discover effective regularization strengths end-to-end.
Extending this to a coordinate-wise \emph{Learnable Diagonal} ($\mathbf{D} = \mathrm{diag}(\mathrm{softplus}(\boldsymbol{\rho}))$) achieves the overall best performance. This progression shows the benefit of learning a separate regularization strength for each channel: the model penalizes less useful feature directions more strongly while preserving task-critical ones.

\vspace{-1mm}
\section{Concluding Remarks}
\vspace{-1mm}

In this work, we revisit sequence compression, a pervasive yet often overlooked bottleneck in modern architectures, through the lens of test-time regression. Instead of relying on fixed pooling or learned projections, we formulate token compression as an instance-adaptive ridge regression problem, allowing the compressed representation to be solved directly from each input sequence. 
By exploiting inter-token correlations through second-order interactions, TTR-Compressor offers a more expressive way to preserve sequence information while remaining lightweight and parameter-efficient. Its dual-form solution further makes this formulation scalable in practice.
Empirically, TTR-Compressor performs strongly across both block-wise sparse attention and multimodal token reduction, improving long-context retrieval while matching standard visual mergers with significantly fewer tokens.
More broadly, our results suggest that sequence compression need not be viewed as static aggregation, but as an input-dependent inference problem that adapts to the structure of each sequence. We expect this perspective establishes test-time regression as a simple, expressive, and scalable primitive for efficient foundation models.

\newpage

\bibliography{iclr2027_conference}
\bibliographystyle{iclr2027_conference}

\newpage
\appendix
\onecolumn
\addcontentsline{toc}{section}{Appendix}
\renewcommand \thepart{}
\renewcommand \partname{}
\part{\Large{\centerline{Appendix}}}
\parttoc

\newpage

\section{Derivation of the Dual-form Solutions}
\label{app:dual_proofs}

We first prove the equivalence between the primal and dual forms of ridge regression (Eq.~\ref{eq:ttr_dual}). We then show that, with learnable diagonal regularization, rescaling the keys yields the dual solution in Eq.~\ref{eq:ttr_final}.

\textbf{Equivalence of the Primal and Dual Forms.}

\begin{lemma}[Push-through identity]
\label{lem:push_through}
For any $\bm{K}\in\mathbb{R}^{T\times d}$ and $\lambda>0$,
\begin{equation}
    (\bm{K}^\top\bm{K}+\lambda\bm{I}_d)^{-1}\bm{K}^\top
    =
    \bm{K}^\top(\bm{K}\bm{K}^\top+\lambda\bm{I}_T)^{-1}.
    \label{eq:push_through}
\end{equation}
\end{lemma}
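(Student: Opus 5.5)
The plan is to derive the identity from the trivial commutation relation
\begin{equation*}
    \bm{K}^\top(\bm{K}\bm{K}^\top+\lambda\bm{I}_T)
    =
    \bm{K}^\top\bm{K}\bm{K}^\top+\lambda\bm{K}^\top
    =
    (\bm{K}^\top\bm{K}+\lambda\bm{I}_d)\bm{K}^\top,
\end{equation*}
which follows from associativity of matrix multiplication and distributivity. Both sides are $d\times T$ matrices, so the dimensions match. Once this relation is in hand, we will left-multiply both sides by $(\bm{K}^\top\bm{K}+\lambda\bm{I}_d)^{-1}$ and right-multiply by $(\bm{K}\bm{K}^\top+\lambda\bm{I}_T)^{-1}$. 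The two inverses then cancel against their partners and leave exactly Eq.~\ref{eq:push_through}.

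The only substantive step is to justify that both inverses exist, and this is where $\lambda>0$ is used. We will note that $\bm{K}^\top\bm{K}$ and $\bm{K}\bm{K}^\top$ are symmetric positive semidefinite, since $\bm{u}^\top\bm{K}^\top\bm{K}\bm{u}=\|\bm{K}\bm{u}\|_2^2\ge 0$, and similarly for the other. Adding $\lambda\bm{I}$ then gives $\bm{u}^\top(\cdot)\bm{u}\ge\lambda\|\bm{u}\|_2^2>0$ for every $\bm{u}\neq\bm{0}$. Hence both matrices are symmetric positive definite, and in particular invertible. No assumption on the relative sizes of $T$ and $d$, or on the rank of $\bm{K}$, is needed.

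I do not expect a real obstacle here; the argument is a two-line manipulation once invertibility is established. The one point of care is the order of multiplication when cancelling, since the two inverses have different sizes ($d\times d$ versus $T\times T$). I would write the cancellation out explicitly rather than compress it. As a remark for later use, the same argument goes through verbatim with $\lambda\bm{I}$ replaced by the identity and $\bm{K}$ replaced by $\widetilde{\bm{K}}=\bm{K}\bm{D}^{-1/2}$. That substitution is what will yield Eq.~\ref{eq:ttr_final}, using $(\bm{K}^\top\bm{K}+\bm{D})^{-1}=\bm{D}^{-1/2}(\widetilde{\bm{K}}^\top\widetilde{\bm{K}}+\bm{I}_d)^{-1}\bm{D}^{-1/2}$.
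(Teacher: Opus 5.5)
Your proposal is correct and matches the paper's proof essentially step for step. The paper likewise gets invertibility from positive semidefiniteness of $\bm{K}^\top\bm{K}$ and $\bm{K}\bm{K}^\top$ together with $\lambda>0$, expands $(\bm{K}^\top\bm{K}+\lambda\bm{I}_d)\bm{K}^\top=\bm{K}^\top(\bm{K}\bm{K}^\top+\lambda\bm{I}_T)$, and multiplies by the two inverses on the left and right; your closing remark about $\widetilde{\bm{K}}=\bm{K}\bm{D}^{-1/2}$ is exactly how the paper then proves Proposition~\ref{prop:diag_dual}.
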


\begin{proof}
Both $\bm{K}^\top\bm{K}$ and $\bm{K}\bm{K}^\top$ are positive semidefinite. Hence, for $\lambda>0$, the matrices $\bm{K}^\top\bm{K}+\lambda\bm{I}_d$ and $\bm{K}\bm{K}^\top+\lambda\bm{I}_T$ are symmetric positive definite and therefore invertible. Expanding both sides directly gives
\begin{equation}
    (\bm{K}^\top\bm{K}+\lambda\bm{I}_d)\,\bm{K}^\top
    =
    \bm{K}^\top\bm{K}\bm{K}^\top+\lambda\bm{K}^\top
    =
    \bm{K}^\top(\bm{K}\bm{K}^\top+\lambda\bm{I}_T).
\end{equation}
Left-multiplying by $(\bm{K}^\top\bm{K}+\lambda\bm{I}_d)^{-1}$ and right-multiplying by $(\bm{K}\bm{K}^\top+\lambda\bm{I}_T)^{-1}$ yields Eq.~\ref{eq:push_through}.
\end{proof}

Right-multiplying Eq.~\ref{eq:push_through} by $\bm{v}$ turns the primal solution in Eq.~\ref{eq:ttr_primal} into the dual form in Eq.~\ref{eq:ttr_dual}:
\begin{equation}
    \bm{h}^{\star}
    =
    (\bm{K}^\top\bm{K}+\lambda\bm{I}_d)^{-1}\bm{K}^\top\bm{v}
    =
    \bm{K}^\top(\bm{K}\bm{K}^\top+\lambda\bm{I}_T)^{-1}\bm{v}.
\end{equation}
Therefore, $\bm{h}^{\star}$ lies in the row space of $\bm{K}$. Its dual coefficients $\bm{\alpha}=(\bm{K}\bm{K}^\top+\lambda\bm{I}_T)^{-1}\bm{v}\in\mathbb{R}^{T}$ are obtained by solving a single $T\times T$ system.

\textbf{Dual Solution under Diagonal Regularization.}

\begin{proposition}
\label{prop:diag_dual}
Let $\bm{D}=\operatorname{diag}(\delta_1,\ldots,\delta_d)$ with $\delta_j>0$, and let $\widetilde{\bm{K}}=\bm{K}\bm{D}^{-1/2}$. Then the solution of Eq.~\ref{eq:ttr_diagonal_objective} satisfies
\begin{equation}
    (\bm{K}^\top\bm{K}+\bm{D})^{-1}\bm{K}^\top\bm{v}
    =
    \bm{D}^{-1/2}\widetilde{\bm{K}}^\top
    \big(\bm{I}_T+\widetilde{\bm{K}}\widetilde{\bm{K}}^\top\big)^{-1}\bm{v}.
    \label{eq:diag_dual}
\end{equation}
\end{proposition}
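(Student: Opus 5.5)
The plan is to factor $\bm{D}^{1/2}$ out of the primal system matrix and thereby reduce the claim to Lemma~\ref{lem:push_through} with $\lambda=1$, applied to the rescaled keys $\widetilde{\bm{K}}$.

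First, since every $\delta_j>0$, the matrix $\bm{D}^{1/2}=\operatorname{diag}(\sqrt{\delta_1},\ldots,\sqrt{\delta_d})$ is well defined, symmetric and invertible. From $\widetilde{\bm{K}}=\bm{K}\bm{D}^{-1/2}$ we get $\bm{K}=\widetilde{\bm{K}}\bm{D}^{1/2}$. Substituting into the system matrix gives
\begin{equation*}
    \bm{K}^\top\bm{K}+\bm{D}
    =
    \bm{D}^{1/2}\widetilde{\bm{K}}^\top\widetilde{\bm{K}}\bm{D}^{1/2}+\bm{D}^{1/2}\bm{D}^{1/2}
    =
    \bm{D}^{1/2}\big(\widetilde{\bm{K}}^\top\widetilde{\bm{K}}+\bm{I}_d\big)\bm{D}^{1/2}.
\end{equation*}
All three factors are invertible: the outer ones because $\bm{D}^{1/2}$ is invertible, and the middle one because it is symmetric positive definite. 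Inverting the product in reverse order yields
\begin{equation*}
    (\bm{K}^\top\bm{K}+\bm{D})^{-1}
    =
    \bm{D}^{-1/2}\big(\widetilde{\bm{K}}^\top\widetilde{\bm{K}}+\bm{I}_d\big)^{-1}\bm{D}^{-1/2}.
\end{equation*}
Multiplying on the right by $\bm{K}^\top\bm{v}=\bm{D}^{1/2}\widetilde{\bm{K}}^\top\bm{v}$, the inner $\bm{D}^{-1/2}\bm{D}^{1/2}$ cancels, leaving
\begin{equation*}
    \bm{D}^{-1/2}\big(\widetilde{\bm{K}}^\top\widetilde{\bm{K}}+\bm{I}_d\big)^{-1}\widetilde{\bm{K}}^\top\bm{v}.
\end{equation*}

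Next, apply Lemma~\ref{lem:push_through} with $\bm{K}$ replaced by $\widetilde{\bm{K}}$ and $\lambda=1$. This gives $(\widetilde{\bm{K}}^\top\widetilde{\bm{K}}+\bm{I}_d)^{-1}\widetilde{\bm{K}}^\top=\widetilde{\bm{K}}^\top(\widetilde{\bm{K}}\widetilde{\bm{K}}^\top+\bm{I}_T)^{-1}$. Substituting this into the previous expression produces exactly the right-hand side of Eq.~\ref{eq:diag_dual}.

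Finally, the left-hand side of Eq.~\ref{eq:diag_dual} is indeed the unique minimizer of Eq.~\ref{eq:ttr_diagonal_objective}. The objective is strictly convex because its Hessian $\bm{K}^\top\bm{K}+\bm{D}$ is positive definite, and setting its gradient to zero gives $(\bm{K}^\top\bm{K}+\bm{D})\bm{h}=\bm{K}^\top\bm{v}$, which is Eq.~\ref{eq:ttr_diagonal_primal}.

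There is no real obstacle here. The only step that needs care is the factorization of $\bm{K}^\top\bm{K}+\bm{D}$: one must keep track of the order of the non-commuting factors and use that $\bm{D}^{1/2}$ is symmetric, so that $\bm{K}^\top=\bm{D}^{1/2}\widetilde{\bm{K}}^\top$.
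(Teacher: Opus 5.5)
Your proof is correct and follows the paper's argument essentially step for step: you factor $\bm{K}^\top\bm{K}+\bm{D}=\bm{D}^{1/2}(\widetilde{\bm{K}}^\top\widetilde{\bm{K}}+\bm{I}_d)\bm{D}^{1/2}$, invert, cancel against $\bm{K}^\top=\bm{D}^{1/2}\widetilde{\bm{K}}^\top$, and finish with Lemma~\ref{lem:push_through} at $\lambda=1$. Your closing check that the left-hand side is the unique minimizer of Eq.~\ref{eq:ttr_diagonal_objective} is a small addition that the paper handles in the main text rather than inside this proof.
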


\begin{proof}
Since $\bm{D}$ is diagonal with positive entries, $\bm{D}^{\pm 1/2}$ are well defined and $\bm{D}^{1/2}\bm{D}^{-1/2}=\bm{I}_d$. Using $\bm{K}=\widetilde{\bm{K}}\bm{D}^{1/2}$, we factorize
\begin{equation}
    \bm{K}^\top\bm{K}+\bm{D}
    =
    \bm{D}^{1/2}\big(\widetilde{\bm{K}}^\top\widetilde{\bm{K}}+\bm{I}_d\big)\bm{D}^{1/2}.
\end{equation}
Inverting this product and using $\bm{K}^\top=\bm{D}^{1/2}\widetilde{\bm{K}}^\top$ gives
\begin{equation}
    (\bm{K}^\top\bm{K}+\bm{D})^{-1}\bm{K}^\top\bm{v}
    =
    \bm{D}^{-1/2}\big(\widetilde{\bm{K}}^\top\widetilde{\bm{K}}+\bm{I}_d\big)^{-1}\bm{D}^{-1/2}\bm{D}^{1/2}\widetilde{\bm{K}}^\top\bm{v}
    =
    \bm{D}^{-1/2}\big(\widetilde{\bm{K}}^\top\widetilde{\bm{K}}+\bm{I}_d\big)^{-1}\widetilde{\bm{K}}^\top\bm{v}.
\end{equation}
Applying Lemma~\ref{lem:push_through} with $\bm{K}\leftarrow\widetilde{\bm{K}}$ and $\lambda=1$ gives Eq.~\ref{eq:diag_dual}.
\end{proof}

Equivalently, the change of variables $\bm{g}=\bm{D}^{1/2}\bm{h}$ turns Eq.~\ref{eq:ttr_diagonal_objective} into a standard ridge problem, $\frac{1}{2}\|\widetilde{\bm{K}}\bm{g}-\bm{v}\|_2^2+\frac{1}{2}\|\bm{g}\|_2^2$. Its dual solution is then mapped back through $\bm{h}^{\star}=\bm{D}^{-1/2}\bm{g}^{\star}$.

Two further properties follow:
\begin{itemize}
    \item The matrix $\bm{I}_T+\widetilde{\bm{K}}\widetilde{\bm{K}}^\top$ is symmetric positive definite, so its Cholesky factorization exists and is unique.
    \item Because $\bm{D}$ is diagonal, forming $\widetilde{\bm{K}}$ and applying $\bm{D}^{-1/2}$ each cost only $\mathcal{O}(Td)$ and $\mathcal{O}(d)$. The overall complexity therefore remains $\mathcal{O}(T^2d+T^3)$.
\end{itemize}
Setting $\bm{D}=\lambda\bm{I}_d$ recovers Eq.~\ref{eq:ttr_dual}.
\section{Redundancy-Aware Weighting of TTR-Compressor}
\label{app:redundancy}

We formalize the claim in Section~\ref{sec:ttr_spectral} that redundant tokens have diminishing influence, while information not explained by correlated tokens keeps a larger relative influence. We take isotropic regularization $\lambda>0$ and treat the targets $\bm v$ as arbitrary, so the results hold for any learned projection $\bm a$. The extension to the learnable $\bm D$ is given at the end. Throughout, let $\bm A=\bm K^\top\bm K+\lambda\bm I_d$, and let $\bm\alpha=(\bm G+\lambda\bm I_T)^{-1}\bm v$, so that $\bm h^\star=\bm K^\top\bm\alpha$.

\textbf{Effective Weights as Unexplained Residuals.}

\begin{proposition}[Residual form]
\label{prop:residual}
For any keys $\bm K$ and targets $\bm v$, the effective weights satisfy
\begin{equation}
    \bm\alpha=\frac{1}{\lambda}\big(\bm v-\bm K\bm h^\star\big).
    \label{eq:residual_form}
\end{equation}
\end{proposition}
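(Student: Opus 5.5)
The identity follows from the defining linear system for $\bm\alpha$ together with the representation $\bm h^\star=\bm K^\top\bm\alpha$ established via Lemma~\ref{lem:push_through}. The plan is to start from the dual system and rearrange it; no inversion beyond what is already known to exist is needed.

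First, since $\lambda>0$, the matrix $\bm G+\lambda\bm I_T=\bm K\bm K^\top+\lambda\bm I_T$ is symmetric positive definite, as noted in the proof of Lemma~\ref{lem:push_through}. Hence $\bm\alpha$ is well defined and is the unique solution of
\begin{equation*}
    (\bm K\bm K^\top+\lambda\bm I_T)\bm\alpha=\bm v .
\end{equation*}
Expanding the left-hand side gives $\bm K(\bm K^\top\bm\alpha)+\lambda\bm\alpha=\bm v$. By the dual form (Eq.~\ref{eq:ttr_dual}, equivalently Lemma~\ref{lem:push_through} applied to $\bm v$), we have $\bm K^\top\bm\alpha=\bm h^\star$. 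Substituting yields $\bm K\bm h^\star+\lambda\bm\alpha=\bm v$. Dividing by $\lambda>0$ then gives Eq.~\ref{eq:residual_form}.

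As a sanity check, I would also verify the statement from the primal side. The first-order condition $\bm K^\top(\bm K\bm h^\star-\bm v)+\lambda\bm h^\star=0$ gives $\bm h^\star=\bm K^\top\big(\tfrac{1}{\lambda}(\bm v-\bm K\bm h^\star)\big)$. This shows that the residual vector is a valid set of dual coefficients. Uniqueness then follows because plugging it back reproduces the system $(\bm G+\lambda\bm I_T)\bm\alpha=\bm v$, which has a unique solution.

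There is no real obstacle here. The only point needing care is to use the identification $\bm h^\star=\bm K^\top\bm\alpha$ with the \emph{same} $\bm\alpha$, which is exactly what the push-through identity guarantees. The statement also holds for arbitrary targets $\bm v$, so nothing about the construction $v_t=\bm a^\top\bm x_t$ is used.
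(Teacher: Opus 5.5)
Your proof is correct and is essentially the paper's own argument: expand $(\bm G+\lambda\bm I_T)\bm\alpha=\bm v$, substitute $\bm K^\top\bm\alpha=\bm h^\star$, and divide by $\lambda$. Your primal-side sanity check, which uses the first-order condition together with uniqueness of the dual solution, is also valid, but it is not needed.
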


\begin{proof}
From $(\bm G+\lambda\bm I_T)\bm\alpha=\bm v$ we have $\lambda\bm\alpha=\bm v-\bm K\bm K^\top\bm\alpha=\bm v-\bm K\bm h^\star$.
\end{proof}

Thus, a token is weighted by the part of its target that the jointly fitted representation leaves unexplained. By contrast, additive pooling assigns weights $\bm v$ regardless of how the other tokens are arranged. To separate the role of the \emph{other} tokens, let $\bm K_{-t}$ and $\bm v_{-t}$ denote the keys and targets with token $t$ removed. Define
\begin{equation}
    \bm A_{-t}=\bm K_{-t}^\top\bm K_{-t}+\lambda\bm I_d,
    \qquad
    \bm h_{-t}=\bm A_{-t}^{-1}\bm K_{-t}^\top\bm v_{-t},
\end{equation}
where $\bm h_{-t}$ is the solution fitted without token $t$.

\begin{proposition}[Leave-one-out form]
\label{prop:loo}
Let $r_t=v_t-\bm k_t^\top\bm h_{-t}$ be the residual of token $t$ under the other tokens, and let $q_t=\bm k_t^\top\bm A_{-t}^{-1}\bm k_t\ge 0$. Then
\begin{equation}
    \alpha_t=\frac{r_t}{\lambda(1+q_t)},
    \qquad
    \bm h^\star=\bm h_{-t}+\frac{r_t}{1+q_t}\,\bm A_{-t}^{-1}\bm k_t.
    \label{eq:loo}
\end{equation}
\end{proposition}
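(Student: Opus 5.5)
The plan is to work in the primal form and treat token $t$ as a rank-one update. Since $\bm K^\top\bm K=\bm K_{-t}^\top\bm K_{-t}+\bm k_t\bm k_t^\top$ and $\bm K^\top\bm v=\bm K_{-t}^\top\bm v_{-t}+v_t\bm k_t$, we have $\bm A=\bm A_{-t}+\bm k_t\bm k_t^\top$. Here $\bm A_{-t}$ is symmetric positive definite because $\lambda>0$, so $q_t\ge 0$ and $1+q_t>0$. The Sherman--Morrison formula then gives
\begin{equation}
    \bm A^{-1}=\bm A_{-t}^{-1}-\frac{\bm A_{-t}^{-1}\bm k_t\bm k_t^\top\bm A_{-t}^{-1}}{1+q_t},
\end{equation}
and the basic consequence is $\bm A^{-1}\bm k_t=\bm A_{-t}^{-1}\bm k_t/(1+q_t)$.

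To obtain the update for $\bm h^\star$, write $\bm h^\star=\bm A^{-1}(\bm K_{-t}^\top\bm v_{-t}+v_t\bm k_t)$ and handle the two pieces separately.

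\textbf{First piece.} Applying Sherman--Morrison gives
\begin{equation}
    \bm A^{-1}\bm K_{-t}^\top\bm v_{-t}=\bm h_{-t}-\frac{(\bm k_t^\top\bm h_{-t})}{1+q_t}\,\bm A_{-t}^{-1}\bm k_t.
\end{equation}

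\textbf{Second piece.} This is simply $v_t\bm A_{-t}^{-1}\bm k_t/(1+q_t)$.

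Adding the two pieces yields $\bm h^\star=\bm h_{-t}+\frac{r_t}{1+q_t}\bm A_{-t}^{-1}\bm k_t$, which is the second identity in Eq.~\ref{eq:loo}.

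For $\alpha_t$, I will invoke Proposition~\ref{prop:residual} and read off its $t$-th coordinate: $\lambda\alpha_t=v_t-\bm k_t^\top\bm h^\star$. Substituting the expression for $\bm h^\star$ just derived gives
\begin{equation}
    \lambda\alpha_t=v_t-\bm k_t^\top\bm h_{-t}-\frac{r_t\,q_t}{1+q_t}=r_t\Big(1-\frac{q_t}{1+q_t}\Big)=\frac{r_t}{1+q_t},
\end{equation}
since $\bm k_t^\top\bm A_{-t}^{-1}\bm k_t=q_t$. This is the first identity.

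There is no serious obstacle. The only step needing care is the bookkeeping in the Sherman--Morrison expansion: the cross term must collect into the scalar $\bm k_t^\top\bm h_{-t}$, so that it combines with $v_t$ to form $r_t$. One should also confirm that $\bm A_{-t}$ is well defined when $T=1$, where $\bm K_{-t}$ is empty, $\bm A_{-t}=\lambda\bm I_d$ and $\bm h_{-t}=\bm 0$; the formulas still hold in that case.
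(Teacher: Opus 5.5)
Your proof is correct. The Sherman--Morrison bookkeeping checks out: $\bm A^{-1}\bm k_t=\bm A_{-t}^{-1}\bm k_t/(1+q_t)$, and the cross term does collect into $\bm k_t^\top\bm h_{-t}$. Your $T=1$ check is also right.

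The paper uses the same rank-one split $\bm A=\bm A_{-t}+\bm k_t\bm k_t^\top$, but it never forms $\bm A^{-1}$. Instead, it moves $\bm k_t\bm k_t^\top\bm h^\star$ to the right-hand side of the normal equations. This gives the implicit relation $\bm h^\star=\bm h_{-t}+(v_t-\bm k_t^\top\bm h^\star)\bm A_{-t}^{-1}\bm k_t$. Taking the inner product with $\bm k_t$ then solves for the single scalar $v_t-\bm k_t^\top\bm h^\star=r_t/(1+q_t)$. That scalar is both $\lambda\alpha_t$ (by Proposition~\ref{prop:residual}) and the coefficient in the update, so both identities follow at once. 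In effect, the paper rederives Sherman--Morrison only for the one vector it needs.

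Your route is slightly longer, relies on an external lemma, and reverses the order: the update for $\bm h^\star$ comes first, then $\alpha_t$ from the residual form. In exchange, it gives the full updated inverse $\bm A^{-1}$. That is more reusable if you later want the leave-one-out effect on other linear functionals of $\bm h^\star$.
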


\begin{proof}
Write $\bm A=\bm A_{-t}+\bm k_t\bm k_t^\top$ and $\bm K^\top\bm v=\bm K_{-t}^\top\bm v_{-t}+v_t\bm k_t$. Substituting into $\bm A\bm h^\star=\bm K^\top\bm v$ gives
\begin{equation}
    \bm h^\star=\bm h_{-t}+\big(v_t-\bm k_t^\top\bm h^\star\big)\bm A_{-t}^{-1}\bm k_t.
\end{equation}
Taking the inner product with $\bm k_t$ gives $\bm k_t^\top\bm h^\star=\bm k_t^\top\bm h_{-t}+(v_t-\bm k_t^\top\bm h^\star)\,q_t$, and hence $v_t-\bm k_t^\top\bm h^\star=r_t/(1+q_t)$. Substituting this into the display above and into Eq.~\ref{eq:residual_form} yields Eq.~\ref{eq:loo}.
\end{proof}

Proposition~\ref{prop:loo} shows that token $t$ changes the representation only through $r_t$, the part of its target that the remaining tokens cannot predict. In particular, if the other tokens fully explain token $t$ ($r_t=0$), then $\alpha_t=0$ and $\bm h^\star=\bm h_{-t}$, so the token has no influence. Additive pooling instead always adds $v_t\bm k_t$.

\textbf{Saturation under Repeated Evidence.}
We now quantify this effect for groups of correlated tokens, generalizing the example in Figure~\ref{fig:method}(b).

\begin{proposition}[Group saturation]
\label{prop:saturation}
Suppose the tokens form $C$ groups. Group $c$ contains $m_c$ tokens that share the key $\bm u_c$, with $\|\bm u_c\|_2=1$ and $\bm u_c^\top\bm u_{c'}=0$ for $c\neq c'$. Let $S_c$ be the sum of the targets in group $c$. Then
\begin{equation}
    \bm h^\star=\sum_{c=1}^{C}\frac{S_c}{\lambda+m_c}\,\bm u_c,
    \qquad\text{whereas}\qquad
    \bm K^\top\bm v=\sum_{c=1}^{C}S_c\,\bm u_c .
    \label{eq:saturation}
\end{equation}
\end{proposition}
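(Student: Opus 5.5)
I would work in the primal form, $\bm h^\star=\bm A^{-1}\bm K^\top\bm v$ with $\bm A=\bm K^\top\bm K+\lambda\bm I_d$. The point is that under the stated group structure $\bm A$ becomes diagonal in an orthonormal basis built from the $\bm u_c$. The plan is to compute $\bm K^\top\bm v$ and $\bm K^\top\bm K$ explicitly and then invert $\bm A$ on each direction $\bm u_c$.

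\emph{Step one} computes the right-hand side. Each row of $\bm K$ equals $\bm u_c^\top$ for the group $c$ containing that token, so
\[
\bm K^\top\bm v=\sum_t v_t\bm k_t=\sum_{c}\Big(\sum_{t\in c}v_t\Big)\bm u_c=\sum_c S_c\bm u_c .
\]
This already gives the second identity in Eq.~\ref{eq:saturation}.

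\emph{Step two} computes the Gram term in feature space:
\[
\bm K^\top\bm K=\sum_t\bm k_t\bm k_t^\top=\sum_c m_c\,\bm u_c\bm u_c^\top .
\]
By orthonormality, $\bm A\bm u_c=(m_c+\lambda)\bm u_c$ for every $c$. So each $\bm u_c$ is an eigenvector of the symmetric positive definite $\bm A$ with eigenvalue $\lambda+m_c>0$. Hence $\bm A^{-1}\bm u_c=\bm u_c/(\lambda+m_c)$. By linearity,
\[
\bm h^\star=\bm A^{-1}\sum_c S_c\bm u_c=\sum_c\frac{S_c}{\lambda+m_c}\,\bm u_c .
\]
The orthogonal complement of $\mathrm{span}\{\bm u_c\}$ plays no role, because the right-hand side has no component there. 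Equivalently, one may check directly that this candidate satisfies the normal equation $\bm A\bm h=\bm K^\top\bm v$, and then invoke uniqueness from strict convexity.

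As a cross-check, I would also verify the result in the dual form of Lemma~\ref{lem:push_through}. There $\bm G+\lambda\bm I_T$ is block diagonal with blocks $\lambda\bm I_{m_c}+\bm 1\bm 1^\top$. By Sherman--Morrison, each block has inverse $\frac1\lambda\big(\bm I-\frac{\bm 1\bm 1^\top}{\lambda+m_c}\big)$. Summing the resulting $\alpha_t$ within a group gives $S_c/(\lambda+m_c)$, which reproduces the same formula.

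There is no real obstacle here. The only step needing care is justifying the eigen-decomposition of $\bm A$: one must note that $\bm u_c$ lies in the eigenspace even though $\bm A$ also acts on the complement, and that linearity of $\bm A^{-1}$ suffices. The checking approach via the normal equation sidesteps even this.
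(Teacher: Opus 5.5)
Your proof is correct, but your main argument is not the paper's route. The paper's proof is essentially your dual-form cross-check. It works entirely in token space: $\bm G$ is block diagonal with all-ones blocks $\bm J_{m_c}$, and $\bm J_{m_c}+\lambda\bm I_{m_c}$ is inverted using $\bm J_m^2=m\bm J_m$. Summing the resulting weights gives $\bm 1^\top\bm\alpha_c=S_c/(\lambda+m_c)$, and the paper then reassembles $\bm h^\star=\bm K^\top\bm\alpha=\sum_c\bm u_c\,\bm 1^\top\bm\alpha_c$.

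Your primal route instead observes that $\bm A=\sum_c m_c\bm u_c\bm u_c^\top+\lambda\bm I_d$ acts on each $\bm u_c$ as multiplication by $\lambda+m_c$. This is shorter and coordinate-free. It needs neither an ordering of tokens into blocks nor any explicit inversion. It also adapts directly to the learnable-$\bm D$ extension: with $\bm A=\bm K^\top\bm K+\bm D$ and $\bm D^{-1}$-orthogonal groups, one checks $\bm A\,\bm D^{-1}\bm u_c=\lambda_c^{-1}(\lambda_c+m_c)\,\bm u_c$, where $\lambda_c=(\bm u_c^\top\bm D^{-1}\bm u_c)^{-1}$.

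What the dual computation gives in exchange is the individual per-token weights, not just their group sums: $\alpha_t=\lambda^{-1}\bigl(v_t-S_c/(\lambda+m_c)\bigr)$ for $t$ in group $c$. These weights connect the result to the residual form of Proposition~\ref{prop:residual}. They also reproduce $\bm\alpha=(\tfrac14,\tfrac14,\tfrac14,\tfrac12)^\top$ in Figure~\ref{fig:method}(b), and they mirror the $T\times T$ solve the compressor actually performs.
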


\begin{proof}
Under this assumption $\bm G$ is block diagonal, and the block for group $c$ is $\bm J_{m_c}=\bm 1\bm 1^\top$. Since $\bm J_m^2=m\bm J_m$, one can verify that $(\bm J_m+\lambda\bm I_m)^{-1}=\lambda^{-1}\big(\bm I_m-\bm J_m/(\lambda+m)\big)$. Hence the weights $\bm\alpha_c$ of group $c$ sum to
\begin{equation}
    \bm 1^\top\bm\alpha_c=\frac{1}{\lambda}\Big(S_c-\frac{m_cS_c}{\lambda+m_c}\Big)=\frac{S_c}{\lambda+m_c}.
\end{equation}
Therefore $\bm h^\star=\sum_c\bm u_c\,\bm 1^\top\bm\alpha_c$, which gives Eq.~\ref{eq:saturation}.
\end{proof}

\begin{corollary}[Diminishing influence of redundancy]
\label{cor:diminishing}
Assume every token has the same target $v>0$, so that $S_c=m_cv$. Compare a group of $m$ repeated tokens with a single distinct token. Their relative influence is
\begin{equation}
    \frac{m/(\lambda+m)}{1/(\lambda+1)}=\frac{m(\lambda+1)}{\lambda+m}<m
    \quad\text{for } m>1.
\end{equation}
This ratio is bounded by $\lambda+1$ for every $m$, whereas the corresponding ratio for additive pooling is $m$. Moreover, the $n$-th repeated token increases the group's coefficient by
\begin{equation}
    \frac{nv}{\lambda+n}-\frac{(n-1)v}{\lambda+n-1}=\frac{\lambda v}{(\lambda+n)(\lambda+n-1)}=\mathcal{O}(n^{-2}),
\end{equation}
which decays with $n$, so repeated evidence has diminishing returns.
\end{corollary}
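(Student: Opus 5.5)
The corollary follows directly from Proposition~\ref{prop:saturation}, specialized to equal targets. I would set $S_c=m_cv$ in Eq.~\ref{eq:saturation}. A group of $m$ repeated tokens sharing the unit key $\bm u$ then receives coefficient $mv/(\lambda+m)$ along $\bm u$. A single distinct token, orthogonal to it, is a group with $m_c=1$ and receives $v/(\lambda+1)$. Orthogonality guarantees these two coefficients are the only contributions along their respective directions, so comparing them is legitimate. Since $v>0$, it cancels, and the ratio of the coefficients is $m(\lambda+1)/(\lambda+m)$. For additive pooling, $\bm K^\top\bm v$ gives $mv$ versus $v$, a ratio of $m$.

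Next come the two inequalities. For $m(\lambda+1)/(\lambda+m)<m$, I would multiply through by the positive $\lambda+m$. This reduces the claim to $\lambda+1<\lambda+m$, which holds exactly when $m>1$. For the uniform bound, $m(\lambda+1)/(\lambda+m)\le\lambda+1$ is equivalent to $m\le\lambda+m$, which holds because $\lambda>0$. The inequality is in fact strict, and the ratio increases monotonically to $\lambda+1$ as $m\to\infty$. I would note this explicitly, since it shows the bound is sharp.

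Finally, for the marginal increment, I would put $nv/(\lambda+n)-(n-1)v/(\lambda+n-1)$ over the common denominator $(\lambda+n)(\lambda+n-1)$. The numerator is $v\big[n(\lambda+n-1)-(n-1)(\lambda+n)\big]$. The quadratic terms cancel and the bracket equals $\lambda$, giving $\lambda v/\big((\lambda+n)(\lambda+n-1)\big)$. Since $\lambda>0$ is fixed, this quantity is positive and bounded by $\lambda v/(n(n-1))$ for $n\ge2$, hence $\mathcal{O}(n^{-2})$ and decreasing in $n$.

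No step is a real obstacle. The only point needing care is interpreting ``relative influence'' as the ratio of the coefficients along the orthogonal directions $\bm u_c$, which is exactly what Proposition~\ref{prop:saturation} isolates. I would state that interpretation at the outset.
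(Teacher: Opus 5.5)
Your proposal is correct and matches the paper's argument: the corollary comes from substituting $S_c=m_cv$ into Proposition~\ref{prop:saturation} and then doing the same elementary algebra for the ratio and for the common-denominator increment. Your extra remarks are correct refinements, not a different route: the strictness and asymptotic sharpness of the $\lambda+1$ bound, and the explicit bound $\lambda v/(n(n-1))$ for $n\ge 2$.
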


The ridge strength controls the transition between two regimes:
\begin{itemize}
    \item As $\lambda\to\infty$, $\lambda\bm h^\star\to\bm K^\top\bm v$, which is additive pooling.
    \item As $\lambda\to 0$, each group contributes $S_c/m_c$, its mean target, regardless of how many tokens it contains.
\end{itemize}
Setting $\lambda=1$, $m=3$ recovers Figure~\ref{fig:method}(b): the ratio is $3{:}2$ and $\bm\alpha=(\tfrac14,\tfrac14,\tfrac14,\tfrac12)^\top$.

\textbf{Extension to Learnable Diagonal Regularization.}
All results carry over to learnable diagonal regularization. Using the change of variables in Proposition~\ref{prop:diag_dual}, set $\bm\alpha=(\bm I_T+\widetilde{\bm K}\widetilde{\bm K}^\top)^{-1}\bm v$, so that $\bm h^\star=\bm D^{-1}\bm K^\top\bm\alpha$. Since $\bm K\bm h^\star=\widetilde{\bm K}\widetilde{\bm K}^\top\bm\alpha$, the argument of Proposition~\ref{prop:residual} gives $\bm\alpha=\bm v-\bm K\bm h^\star$. Proposition~\ref{prop:loo} then holds with $\lambda=1$ and $\bm A_{-t}=\bm K_{-t}^\top\bm K_{-t}+\bm D$. Proposition~\ref{prop:saturation} holds when the groups are orthogonal under the learned metric ($\bm u_c^\top\bm D^{-1}\bm u_{c'}=0$ for $c\neq c'$), with $\lambda$ replaced by the group-specific $\lambda_c=(\bm u_c^\top\bm D^{-1}\bm u_c)^{-1}$ and $\bm u_c$ replaced by $\lambda_c\bm D^{-1}\bm u_c$. The learned $\bm D$ therefore decides which feature directions count as redundant, while the diminishing-influence mechanism itself is unchanged.

\newpage
\section{Implementation Details}
\label{sec:implementation_details}

\textbf{Numerical Precision.}
Convolution and target projection use the input precision
(BF16), while regression keys and targets
are cast to FP32 before the dual solve.
Diagonal regularization is evaluated in FP32.
We solve the positive-definite dual system using Cholesky
factorization without explicitly forming an inverse, and cast
the compressed representation back to the input dtype.

\textbf{Local Target Construction.}
Only the regression targets incorporate convolutional context;
keys are computed from the original inputs.
For language, we apply a non-causal depthwise 1D convolution
with kernel size $5$, independently within each compression block,
using two-sided zero padding of width $2$.
The convolution has neither bias nor activation and is initialized
to the identity.
For vision, we use a $3\times3$ depthwise convolution.
Each channel has its own spatial kernel, with no cross-channel
mixing before the scalar target projection.

The base compressor has $2d$ effective trainable parameters:
$d$ for the target projection and $d$ for diagonal regularization.
A bias-free depthwise convolution with $s$ kernel elements adds
$sd$ parameters and $\mathcal{O}(sTd)$ computation.
Thus, the language and vision variants require $7d$ and $11d$
parameters, respectively.
Their total computational complexity remains
$\mathcal{O}(T^2d)$ for fixed kernel sizes and $T\ll d$.

\section{Experimental Details}
\label{app:experimental_details}

\textbf{Block-wise Sparse Attention.}
HiLS-Attention~\citep{hils} argues that small models provide a more diagnostic setting for RULER's predominantly ``find-and-copy'' tasks, as larger models may compensate for retrieval inaccuracies through greater information bandwidth.
We therefore follow their setup and use 345M-parameter models to evaluate compression quality.
All models use 16 Transformer layers, a hidden dimension of 1,024, an FFN dimension of 4,096, and grouped-query attention with 16 query heads and 2 key--value heads.
We use rank-64 query calibration.
Training uses AdamW weight decay of $0.01$ and gradient clipping at a norm of $1.0$; normalization parameters, biases, and embeddings are excluded from weight decay.
The learning rate warms up over the first $0.7\%$ of training and remains constant thereafter.

\textbf{Visual Token Merger.}
We use BF16 training with DeepSpeed ZeRO-1, gradient checkpointing, and a maximum sequence length of 16,384 tokens.
The two stages use per-device batch sizes of 4 and 1, with gradient accumulation over 4 and 8 steps, respectively.
Both stages apply a $3\%$ learning-rate warmup followed by cosine decay, with minimum learning rates of $10^{-5}$ and $0$ and weight decay of $0.01$ and $0.02$, respectively.
For TTR-Compressor, the regression systems are solved in FP32.

\section{Additional Experimental Results}
\label{app:more_results}

\subsection{Scaling Up Model Size and Training Budget}
To further verify that the advantage of TTR-Compressor is not merely an artifact of the smaller-scale setting used in the main paper, we scale up both the model size and the training budget. Specifically, we train a 1B-parameter model on 100B tokens and evaluate it on the same RULER subtasks across context lengths ranging from 4K to 32K. For comparison, we also scale up Attention-Weighted Pooling---the strongest baseline among all compression methods evaluated in the main paper's 345M-parameter experiments---to the same 1B-parameter, 100B-token setting. 

As shown in Table~\ref{tab:ruler_1b}, TTR-Compressor consistently outperforms the Attention-Weighted Pooling baseline across almost all context lengths and subtasks, achieving an average accuracy of 0.7996 compared to 0.735. Notably, the performance gap widens as the context length increases, indicating that TTR-Compressor scales more gracefully to longer contexts even when both model size and training budget are substantially increased relative to the main paper's 345M-parameter setting.

\begin{table*}[h]
\centering
\scriptsize
\renewcommand{\arraystretch}{1.25}
\setlength{\tabcolsep}{3pt}
\setlength{\abovecaptionskip}{5pt}
\setlength{\belowcaptionskip}{3pt}
\newcommand{\gr}[1]{\textcolor{gray}{#1}}
\resizebox{\linewidth}{!}{%
\begin{tabular}{lccccccccccccccccc}
\multirow{2}{*}{\textbf{Model}} & \multicolumn{4}{c}{\textbf{S-NIAH-1}} & \multicolumn{4}{c}{\textbf{S-NIAH-2}} & \multicolumn{4}{c}{\textbf{S-NIAH-3}} & \multicolumn{4}{c}{\textbf{MK-NIAH-1}} & \multirow{2}{*}{\textbf{Avg.}}\\
 & 4K & 8K & 16K & 32K & 4K & 8K & 16K & 32K & 4K & 8K & 16K & 32K & 4K & 8K & 16K & 32K &\\
\shline
 Attention-Weighted Pooling
 & 0.978 & 0.972 & 0.984 & 0.990 & \textbf{1.000} & \textbf{0.982} & 0.860 & 0.732 & 0.898 & 0.856 & 0.456 & 0.550 & 0.494 & 0.358 & 0.336 & 0.314 & 0.735\\
\rowcolor{Gray}
 TTR-Compressor
 & \textbf{1.000} & \textbf{1.000} & \textbf{1.000} & \textbf{1.000} & \textbf{1.000} & 0.956 & \textbf{0.862} & \textbf{0.750} & \textbf{0.980} & \textbf{0.918} & \textbf{0.740} & \textbf{0.662} & \textbf{0.626} & \textbf{0.442} & \textbf{0.446} & \textbf{0.412} & \textbf{0.7996}\\
\end{tabular}
}
\caption{\footnotesize Accuracy on Single Needle-In-A-Haystack (S-NIAH) and Multi-Key Needle-In-A-Haystack (MK-NIAH) tasks from RULER across varied context lengths. Results are based on a 1B-parameter model trained on 100B tokens. Best values for each context length are \textbf{bolded}.}
\label{tab:ruler_1b}
\vspace{-1mm}
\end{table*}

\subsection{Ablation on the Value Parameterization}
\label{sec:v_ablation}
Under the $3\times3$ compression setup, we compare five parameterizations of $v_t$: (i) \emph{constant} ($v_t=1$); (ii) \emph{linear} ($v_t=\boldsymbol{a}^{\top}\boldsymbol{x}_t$); (iii) \emph{MLP} ($v_t=\mathrm{MLP}(\boldsymbol{x}_t)$); (iv) \emph{bilinear} ($v_t=\mathrm{MLP}(\boldsymbol{x}_t)^{\top}\boldsymbol{x}_t$); and (v) their convolutional variants using local $3\times3$ features $\mathrm{Conv}(\boldsymbol{X})_t$.

As shown in Table~\ref{tab:v_ablation}, we observe:
\textbf{(1) Content dependence is critical}: The constant target yields the lowest average (53.56), while the linear target gains $+0.74$, showing that regression targets must encode token content to form effective summaries.
\textbf{(2) Deeper heads offer marginal gains}: Plain MLP matches the linear baseline (54.30), and adding convolution degrades it (53.91). Although the bilinear head slightly improves performance (54.55), its convolutional counterpart sees no further gain (54.37).
\textbf{(3) Spatial context outweighs head capacity}: Combining local convolution with a linear readout, $v_t=\boldsymbol{a}^{\top}\mathrm{Conv}(\boldsymbol{X})_t$, achieves the best overall score (54.85).
These results indicate that value capacity is not the bottleneck; local neighborhood aggregation provides a more effective inductive bias than non-linear projection. 

\begin{table*}[t]
\centering
\scriptsize
\renewcommand{\arraystretch}{1.23}
\setlength{\tabcolsep}{3pt}
\setlength{\abovecaptionskip}{5pt}
\setlength{\belowcaptionskip}{3pt}
\resizebox{\linewidth}{!}{%
\begin{tabular}{llccccccccccc}
\multirow{2}{*}{\textbf{Method}} & \multirow{2}{*}{\textbf{$\boldsymbol{v}_t$}} & \multicolumn{3}{c}{\textbf{General VQA}} & \multicolumn{2}{c}{\textbf{Knowledge Reasoning}} & \multicolumn{3}{c}{\textbf{Document Understanding}} & \multicolumn{2}{c}{\textbf{Spatial Under.}} & \multirow{2}{*}{\textbf{Avg.}} \\
& & \textbf{MMBench} & \textbf{MMStar} & \textbf{RWQA} & \textbf{MMMU} & \textbf{AI2D} & \textbf{OCRBench} & \textbf{InfoVQA} & \textbf{ChartQA} & \textbf{MMSI} & \textbf{ERQA} & \\
\shline
\multicolumn{13}{l}{\textit{Constant value}} \\
Constant & $1$ & 75.26 & 51.63 & 64.31 & 45.67 & \underline{76.07} & 55.90 & 37.84 & \underline{67.20} & 26.80 & 35.00 & 53.56 \\
\hline
\multicolumn{13}{l}{\textit{Linear value}} \\
\rowcolor{Gray}
TTR-Compressor & $\boldsymbol{a}^{\top}\boldsymbol{x}_t$ & \underline{76.46} & \textbf{52.40} & \underline{65.62} & 45.89 & 75.23 & 58.70 & 38.27 & 67.16 & 26.80 & 36.50 & 54.30 \\
\rowcolor{Gray}
TTR-Compressor (w/ Conv) & $\boldsymbol{a}^{\top}\mathrm{Conv}(\boldsymbol{X})_t$ & \textbf{76.63} & 51.16 & \textbf{65.88} & \textbf{47.44} & 75.16 & \textbf{60.00} & \textbf{39.70} & 67.16 & \underline{28.90} & 36.50 & \textbf{54.85} \\
\hline
\multicolumn{13}{l}{\textit{MLP value}} \\
MLP & $\mathrm{MLP}(\boldsymbol{x}_t)$ & 76.29 & \underline{52.06} & 63.92 & 44.78 & \textbf{76.42} & 58.30 & 38.66 & 67.00 & 28.10 & \underline{37.50} & 54.30 \\
MLP (w/ Conv) & $\mathrm{MLP}(\mathrm{Conv}(\boldsymbol{X})_t)$ & \underline{76.46} & 50.87 & 64.18 & 44.56 & 75.32 & 58.10 & 38.09 & 66.80 & 26.50 & \textbf{38.25} & 53.91 \\
\hline
\multicolumn{13}{l}{\textit{MLP-dot value}} \\
MLP-dot & $\mathrm{MLP}(\boldsymbol{x}_t)^{\top}\boldsymbol{x}_t$ & 76.20 & 51.84 & 65.23 & \underline{47.33} & 75.55 & \underline{59.60} & 38.47 & \textbf{68.88} & 27.40 & 35.00 & \underline{54.55} \\
MLP-dot (w/ Conv) & $\mathrm{MLP}(\mathrm{Conv}(\boldsymbol{X})_t)^{\top}\boldsymbol{x}_t$ & 76.29 & 51.08 & 64.05 & 46.56 & 75.26 & 58.10 & \underline{38.81} & 66.36 & \textbf{29.90} & 37.25 & 54.37 \\
\end{tabular}
}
\caption{\footnotesize Ablation on the parameterization of the value target $\boldsymbol{v}_t$ in the TTR objective $\boldsymbol{h}^{\star}=\arg\min_{\boldsymbol{h}}\frac{1}{2}\|\boldsymbol{K}\boldsymbol{h}-\boldsymbol{v}\|_2^2+\frac{1}{2}\boldsymbol{h}^{\top}\boldsymbol{D}\boldsymbol{h}$. All variants use $3\times3$ token compression. Gray rows denote the TTR-Compressor configurations used in the main paper. Best values are \textbf{bolded}, and second-best values are \underline{underlined}.}
\label{tab:v_ablation}
\vspace{-3mm}
\end{table*}

\section{Inference Latency of Block-wise Sparse Attention}
\label{app:latency}

As shown in Figure~\ref{fig:latency}, all compression methods exhibit similar prefill and decode latency and substantially outperform full attention for context lengths beyond 16K tokens.

\begin{figure*}[h]
    \centering
    \includegraphics[width=0.9\textwidth]{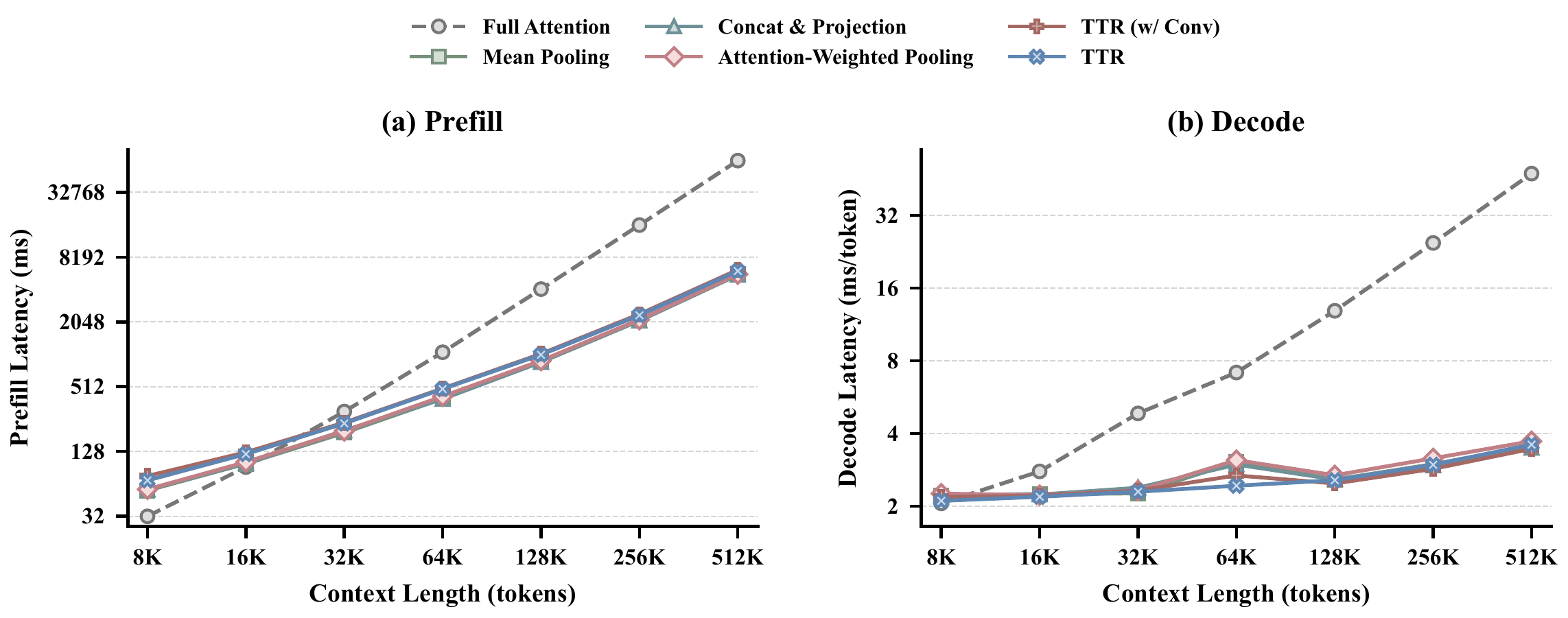}
    \caption{\footnotesize Prefill and decode latency across context lengths.}
    \label{fig:latency}
\end{figure*}

\end{document}